\theoremstyle{plain}
\newtheorem{theorem}{Theorem}[section]
\newtheorem{lemma}[theorem]{Lemma}
\theoremstyle{definition}
\newtheorem{definition}[theorem]{Definition}
\newtheorem{assumption}[theorem]{Assumption}
\theoremstyle{remark}
\newtheorem{remark}[theorem]{Remark}
\def\1{\bm{1}}
\newcommand{\modification}[1]{\textcolor{blue}{#1}}
\renewcommand{\modification}[1]{#1}
\icmltitlerunning{StableSSM: Alleviating the Curse of Memory in State-space Models through Stable Reparameterization}
\begin{document}

\twocolumn[
    \icmltitle{StableSSM: Alleviating the Curse of Memory in State-space Models through Stable Reparameterization}

\icmlsetsymbol{equal}{*}

\begin{icmlauthorlist}
    \icmlauthor{Shida Wang}{nus}
    \icmlauthor{Qianxiao Li}{nus,ifim}
\end{icmlauthorlist}
    
\icmlaffiliation{nus}{Department of Mathematics, National University of Singapore}
\icmlaffiliation{ifim}{Institute for Functional Intelligent Materials, National University of Singapore}

\icmlcorrespondingauthor{Qianxiao Li}{qianxiao@nus.edu.sg}

\icmlkeywords{Machine Learning, ICML, State-space Models, Curse of Memory}

\vskip 0.3in
]

\printAffiliationsAndNotice{}  %

\begin{abstract}
    In this paper, we investigate the long-term memory learning capabilities of state-space models (SSMs) from the perspective of parameterization. 
    We prove that state-space models without any reparameterization exhibit a memory limitation similar to that of traditional RNNs: the target relationships that can be stably approximated by state-space models must have an exponential decaying memory. 
    Our analysis identifies this ``curse of memory'' as a result of the recurrent weights converging to a stability boundary, suggesting that a reparameterization technique can be effective. 
    To this end, we introduce a class of reparameterization techniques for SSMs that effectively lift its memory limitations. 
    Besides improving approximation capabilities, we further illustrate that a principled choice of reparameterization scheme can also enhance optimization stability. 
    We validate our findings using synthetic datasets, language models and image classifications. 
\end{abstract}

\section{Introduction}
\label{introduction}

Understanding long-term memory relationships is fundamental in sequence modeling. 
Capturing this prolonged memory is vital, especially in applications like time series prediction~\citep{connor1994.RecurrentNeuralNetworksa}, language models~\citep{sutskever.GeneratingTextRecurrent}.
Since its emergence, transformers~\citep{vaswani2017.AttentionAllYou} have become the go-to models for language representation tasks~\citep{brown2020.LanguageModelsArea}.
However, a significant drawback lies in their computational complexity, which is asymptotically $O(T^2)$, where $T$ is the sequence length.
This computational bottleneck has been a critical impediment to the further scaling-up of transformer models. 
State-space models such as S4~\citep{gu2022.EfficientlyModelingLonga}, S5~\citep{smith2023.SimplifiedStateSpace}, LRU~\citep{orvieto2023.ResurrectingRecurrentNeurald}, RWKV~\citep{peng2023rwkv}, RetNet~\citep{sun2023retentive} and Mamba~\citep{gu2023.MambaLinearTimeSequence} offer an alternative approach. 
These models are of the recurrent type and excel in long-term memory learning. 
Their architecture is specifically designed to capture temporal dependencies over extended sequences, providing a robust solution for tasks requiring long-term memory~\citep{tay2021.LongRangeArena}.
One of the advantages of state-space models over traditional RNNs lies in their computational efficiency, achieved through the application of parallel scan algorithms~\citep{martin2018.ParallelizingLinearRecurrent} and Fast Fourier Transform (FFT)~\citep{tolimieri1989.AlgorithmsDiscreteFourier,gu2022.EfficientlyModelingLonga}. 
Traditional nonlinear RNNs are often plagued by slow forward and backward propagation, a limitation that state-space models circumvent by leveraging linear RNN blocks.

Traditional linear/nonlinear RNNs exhibit an asymptotically exponential decay in memory~\citep{wang2023.InverseApproximationTheory}. 
This phenomenon explains the difficulty in both approximation and optimization to learn long-term memory using RNNs (also named curse of memory). 
In practice, empirical results show that SSMs variants like S4 overcome some of the memory issues.
\textbf{The previous empirical results suggest that either (i) the ``linear dynamics and nonlinear layerwise activation'' or (ii) the parameterization inherent to S4, is pivotal in achieving the enhanced performance.}
Current research answers which one is more important. 
We first prove an inverse approximation theorem showing that state-space models without reparameterization still suffer from the ``curse of memory'', which is consistent with empirical results~\citep{wang2023.StatespaceModelsLayerwisea}. 
This rules out the point (i) as the reason for SSMs' good long-term memory learning. 
A natural question arises regarding whether the reparameterizations are the key to learn long-term memory. 
We prove a class of reparameterization functions $f$, which we call stable reparameterization, enables the stable approximation of nonlinear functionals. 
This includes commonly used exponential reparameterization and softplus reparameterization. 
Furthermore, we question whether S4's parameterizations are optimal. 
Here we give a particular sense in terms of optimization stability that they are not optimal. 
We propose the optimal one and show its stability via numerical experiments.

We summarize our main contributions as follow:
\begin{enumerate}
    \item We prove that similar to RNNs, the state-space models without reparameterization can only stably approximate targets with exponential decaying memory. 
    \item We identify a class of stable reparameterization which achieves the stable approximation of \textbf{any nonlinear functionals}. 
    Both theoretical and empirical evidence highlight that stable reparameterization is crucial for long-term memory learning.
    \item From the optimization viewpoint, we propose the gradient boundedness as the criterion and show the gradients are bounded by a form that depends on the parameterization. 
    Based on the gradient bound, we solve the differential equation and derive the \textbf{``best''  reparameterization in the stability sense} and verify the stability of this new reparameterization across different parameterization schemes. 
\end{enumerate}

\paragraph{Notation.}
We use the bold face to represent the sequence while then normal letters are scalars, vectors or functions. 
Throughout this paper we use $\|\cdot\|$ to denote norms over sequences of vectors, or function(al)s, while $|\cdot|$ (with subscripts) represents the norm of number, vector or weights tuple.
Here $|x|_\infty := \max_{i} |x_i|, |x|_2 := \sqrt{\sum_{i} x_i^2}, |x|_1 := \sum_{i} |x_i|$ are the usual max ($L_{\infty}$) norm, $L_2$ norm and $L_1$ norm. We use $m$ to denote the hidden dimension.

\section{Background}
\label{background}

In this section, we first introduce the state-space models and compare them to traditional nonlinear RNNs.
Subsequently, we adopt the sequence modeling as a problem in nonlinear functional approximation framework. 
Specifically, the theoretical properties we anticipate from the targets are defined.
Moreover, we define the ``curse of memory'' phenomenon and provide a concise summary of prior theoretical definitions and results concerning RNNs.

\subsection{State-space models}
State-space models (SSMs) are a family of neural networks specialized in sequence modeling.
Unlike Recurrent Neural Networks (RNNs)~\citep{rumelhart1986.LearningRepresentationsBackpropagating}, SSMs have layer-wise nonlinearity and linear dynamics within their hidden states.
This unique structure facilitates accelerated computing using FFT~\citep{gu2022.EfficientlyModelingLonga} or parallel scan~\citep{martin2018.ParallelizingLinearRecurrent}.
With trainable weights $W \in \mathbb{R}^{m \times m}, U \in \mathbb{R}^{m \times d}, b,c \in \mathbb{R}^m$ and activation function $\sigma(\cdot)$, the simplest SSM maps $d$-dimensional input sequence $\mathbf{x}=\{x_t\}$ to 1-dimensional output sequence $\{\hat{y}_t\}$.
To simplify our analysis, we utilize the continuous-time framework referenced in \citet{li2020.CurseMemoryRecurrent}:
\begin{align}
\label{eq:ssm}
\begin{array}{rll}
    \frac{dh_t}{dt}   & = Wh_t + Ux_t + b,             & h_{-\infty} = 0, \\
    \hat{y}_t         & = c^\top \bm{\sigma}(h_t),     & t \in \mathbb{R}. 
\end{array}
\end{align}
As detailed in \cref{sec:graphical_demonstration_eq2}, the above form is a simplification of practical SSMs in the sense that practical SSMs can be realized by the stacking of \cref{eq:ssm}.

It is known that multi-layer state-space models are universal approximators~\citep{wang2023.StatespaceModelsLayerwisea,orvieto2023universality}. 
In particular, when the nonlinearity is added layer-wise, it is sufficient (in approximation sense) to use \emph{real diagonal $W$}~\citep{gu2022.ParameterizationInitializationDiagonal,li2022.ApproximationOptimizationTheory}.
In this paper, we only consider the real diagonal matrix case and denote it by $\Lambda = \textrm{Diag}(\lambda_1, \dots, \lambda_m)$. 
\begin{align}
\label{eq:2}
    \frac{dh_t}{dt} & = \Lambda h_t + Ux_t + b.
\end{align}
Compared with S4, the major differences lie in initialization such as HiPPO~\citep{gu2020.HiPPORecurrentMemorya} and parameters saving method such as DPLR~\citep{gu2022.ParameterizationInitializationDiagonal} and NPLR~\citep{gu2022.EfficientlyModelingLonga}.

\subsection{Sequence modeling as nonlinear functional approximations}

Sequence modeling aims to discern the association between an input series, represented as $\mathbf{x}=\{x_t\}$, and its corresponding output series, denoted as $\mathbf{y}=\{y_t\}$.
The input series are continuous bounded inputs vanishing at infinity: $\mathbf{x} \in \mathcal{X} = C_0(\mathbb{R}, \mathbb{R}^d)$ with norm $\|\mathbf{x}\|_\infty := \sup_{t\in\mathbb R} |x_t|_\infty$.
It is assumed that the input and output sequences are determined from the inputs via a set of functionals, symbolized as
\begin{equation}
    \mathbf{H} = \{H_t: \mathcal{X} \rightarrow \mathbb{R}: t \in \mathbb{R} \},
\end{equation}
through the relationship $y_t = H_t(\mathbf{x})$.
In essence, the challenge of sequential approximation boils down to estimating the desired functional sequence $\mathbf{H}$ using a different functional sequence $\mathbf{\widehat H}$ potentially from a predefined model space such as SSMs.

In this paper we focus on target functionals that are bounded, causal, continuous, regular, time-homogeneous (time-shift invariant). 
Formal definitions are given in \cref{subsec:properties_of_targets}. 
The continuity, boundedness, time-homogeneity, causality are important properties for good sequence-to-sequence models to have. 
Linearity is an important simplification as many theoretical theorems are available in functional analysis~\citep{stein2003.PrincetonLecturesAnalysis}. 
Without loss of generality, we assume that the nonlinear functionals satisfy $H_t(\mathbf{0}) = 0$. 
It can be achieved via studying $H_t^{\textrm{adjusted}}(\mathbf{x}) = H_t(\mathbf{x}) - H_t(\mathbf{0})$.

\subsection{Memory function, stable approximation and curse of memory}

The concept of memory has been extensively explored in academic literature, yet much of previous works rely on heuristic approaches and empirical testing, particularly in the context of learning long-term memory~\citep{poli2023.HyenaHierarchyLarger}.
Here we study the memory property from a theoretical perspective. 

Our study employs the extended framework proposed by \citet{wang2023.InverseApproximationTheory}, which specifically focuses on nonlinear RNNs.
However, these studies do not address the case of state-space models.
Within the same framework, the slightly different memory function and decaying memory concepts enable us to explore the approximation capabilities of nonlinear functionals using SSMs.

\begin{definition}[Memory function]
    For bounded, causal, continuous, regular and time-homogeneous nonlinear functional sequences $\mathbf{H} = \{H_t: t \in \mathbb{R}\}$ on $\mathcal{X}$, define the following function as the \emph{memory function} of $\mathbf{H}$: Over bounded Heaviside input $\mathbf{u}^{x}(t) = x \cdot \1_{\{t \geq 0\}}$
    \begin{equation}
        \mathcal{M}(\mathbf{H})(t) := \sup_{x \neq 0} \frac{\left | \frac{d}{dt}H_t(\mathbf{u}^{x}) \right |}{|x|_{\infty} + 1}.
    \end{equation}
    We add 1 in the memory function definition to make it more regular. 
    The memory function of the target functionals is assumed to be finite for all $t \in \mathbb{R}$. 
\end{definition}

\begin{definition}[Decaying memory]
    The functional sequences $\mathbf{H}$ has a \emph{decaying memory} if 
    \begin{equation}
        \lim_{t \to \infty} \mathcal{M}(\mathbf{H})(t) = 0.
    \end{equation}
    In particular, we say it has an \emph{exponential (polynomial) decaying memory} if there exists constant $\beta > 0$ such that $\lim_{t \to \infty} e^{\beta t} \mathcal{M}(\mathbf{H})(t) = 0$ ($\lim_{t \to \infty} t^\beta \mathcal{M}(\mathbf{H})(t) = 0$).
\end{definition}
Similar to \citet{wang2023.InverseApproximationTheory}, this adjusted memory function definition is also compatible with the memory concept in linear functional which is based on the famous Riesz representation theorem (\cref{thm:riesz_representation_theorem} in \cref{sec:theoretical_backgrounds}).
In the linear functional case, this memory function is the impulse response function. It measures the decay speed of the memory about an impulse given at $t=0$. 
It is a surrogate to characterize the model's memorization about the previous inputs in the hidden states $h_t$ and outputs $y_t$. 
While a large memory value $\mathcal{M}(t)$ does not mean the model at time $t$ has a clear memorization about previous inputs $x_0$, a small memory value $\mathcal{M}(t)$ means the model has forgotten the impulse input $x_0$. 
\textbf{Therefore, having a slow decay memory function $\mathcal{M}(\cdot)$ is a necessary condition to build a model with long-term memory.}
As shown in \cref{subsec:point-wise_continuity}, the nonlinear functionals constructed by state-space models are point-wise continuous over Heaviside inputs. 
Combined with time-homogeneity, we know that state-space models are nonlinear functionals with decaying memory (see \cref{subsec:point-wise-continuity-to-decaying-memory}).

\begin{definition}[Functional sequence approximation in Sobolev-type norm]
Given functional sequences $\mathbf{H}$ and $\widehat{\mathbf{H}}$, we consider the approximation in the following Sobolev-type norm~(\cref{subsec:sobolev_norm}):
\begin{align}
\label{eq:Sobolev_norm}
    & \left \|\mathbf{H} - \widehat{\mathbf{H}} \right \|_{W^{1, \infty}} := \\
    & \sup_t \left (\|H_t - \widehat{H}_t\|_\infty + \left \|\frac{dH_t}{dt} - \frac{d\widehat{H}_t}{dt} \right \|_\infty \right ).
\end{align}
\end{definition}

\begin{definition}[Perturbation error]
    For target $\mathbf{H}$ and parameterized model $\widehat{\mathbf{H}}(\cdot, \theta_m), \theta_m = (\Lambda, U, b, c) \in \Theta_m:=\{\mathbb{R}^{m \times m} \times \mathbb{R}^{m \times d} \times \mathbb{R}^{m} \times \mathbb{R}^m \}$, we define the \emph{perturbation error} for hidden dimension $m$:
    \begin{equation}
    \label{eq:perturbed_error}
        E_m (\beta) :=
        \sup_{
            \tilde{\theta}_m \in
            \{\theta : |\theta - \theta_m|_2 \leq \beta \}
        }
        \|\mathbf{H} - \widehat{\mathbf{H}}(\cdot; \tilde{\theta}_m)\|_{W^{1, \infty}}.
    \end{equation}
    In particular, $\widetilde{\mathbf{H}}$ refers to the perturbed models $\widehat{\mathbf{H}}(\cdot; \tilde{\theta}_m)$. 
    Moreover, $E(\beta) := \limsup_{m \to \infty} E_m (\beta)$ is the asymptotic perturbation error. 
    The weight norm for SSM is $|\theta|_2 := \max(|\Lambda|_2, |U|_2, |b|_2, |c|_2)$. 
\end{definition}

Based on the definition of perturbation error, we consider the stable approximation as introduced by \citet{wang2023.InverseApproximationTheory}. 

\begin{definition}[Stable approximation]
    Let $\beta_0>0$.
    A target functional sequence $\mathbf{H}$ admits a \emph{$\beta_0$-stable approximation} if the perturbed error satisfies that: 
    \begin{enumerate}
        \item $E(0)=0$. 
        \item $E(\beta)$ is continuous for $\beta \in [0 , \beta_0]$.
    \end{enumerate}
\end{definition}
Equation $E(0)=0$ means the universal approximation is achieved by the hypothesis space.
Stable approximation strengthens the universal approximation by requiring the model to be robust against perturbation on the weights. 
As the stable approximation is the necessary requirement for the optimal parameters to be found by the gradient-based optimizations, it is a desirable assumption.

The ``curse of memory'' phenomenon, which was originally formulated for linear functionals and linear RNNs, is well-documented in prior research~\citep{li2020.CurseMemoryRecurrent,li2022.ApproximationOptimizationTheory,jiang2023.BriefSurveyApproximationa}. 
It describes the phenomenon where targets approximated by linear, hardtanh, or tanh RNNs must demonstrate an exponential decaying memory. 
However, empirical observations suggest that state-space models, particularly the S4 variant, may possess favorable properties. 
Thus, it is crucial to ascertain whether the inherent limitations of RNNs can be circumvented using state-space models.
Given the impressive performance of state-space models, notably S4, a few pivotal questions arise: Do the model structure of state-space models overcome the ``curse of memory''?
In the subsequent section, we will demonstrate that the model structure of state-space models does not indeed address the curse of memory phenomenon.

\section{Main results}
\label{main_results}

In this section, we first prove that similar to the traditional recurrent neural networks~\citep{li2020.CurseMemoryRecurrent, wang2023.InverseApproximationTheory}, state-space models without reparameterization suffer from the ``curse of memory'' problem.
This implies the targets that can be stably approximated by SSMs must have exponential decaying memory. 
Our analysis reveals that the problem arises from recurrent weights converging to a stability boundary when learning targets associated with long-term memory.
Therefore, we introduce a class of stable reparameterization techniques to achieve the stable approximation for targets with polynomial decaying memory.

Beside the benefit of approximation perspective, we also discuss the optimization benefit of the stable reparameterizations. 
We show that the stable reparameterization can make the gradient scale more balanced, therefore the optimization of large models can be more stable. 

\subsection{Curse of memory in SSMs}

\begin{figure*}[t!]{
    \centering
    \subfigure[][SSM]{
        \includegraphics[width=0.315\textwidth]{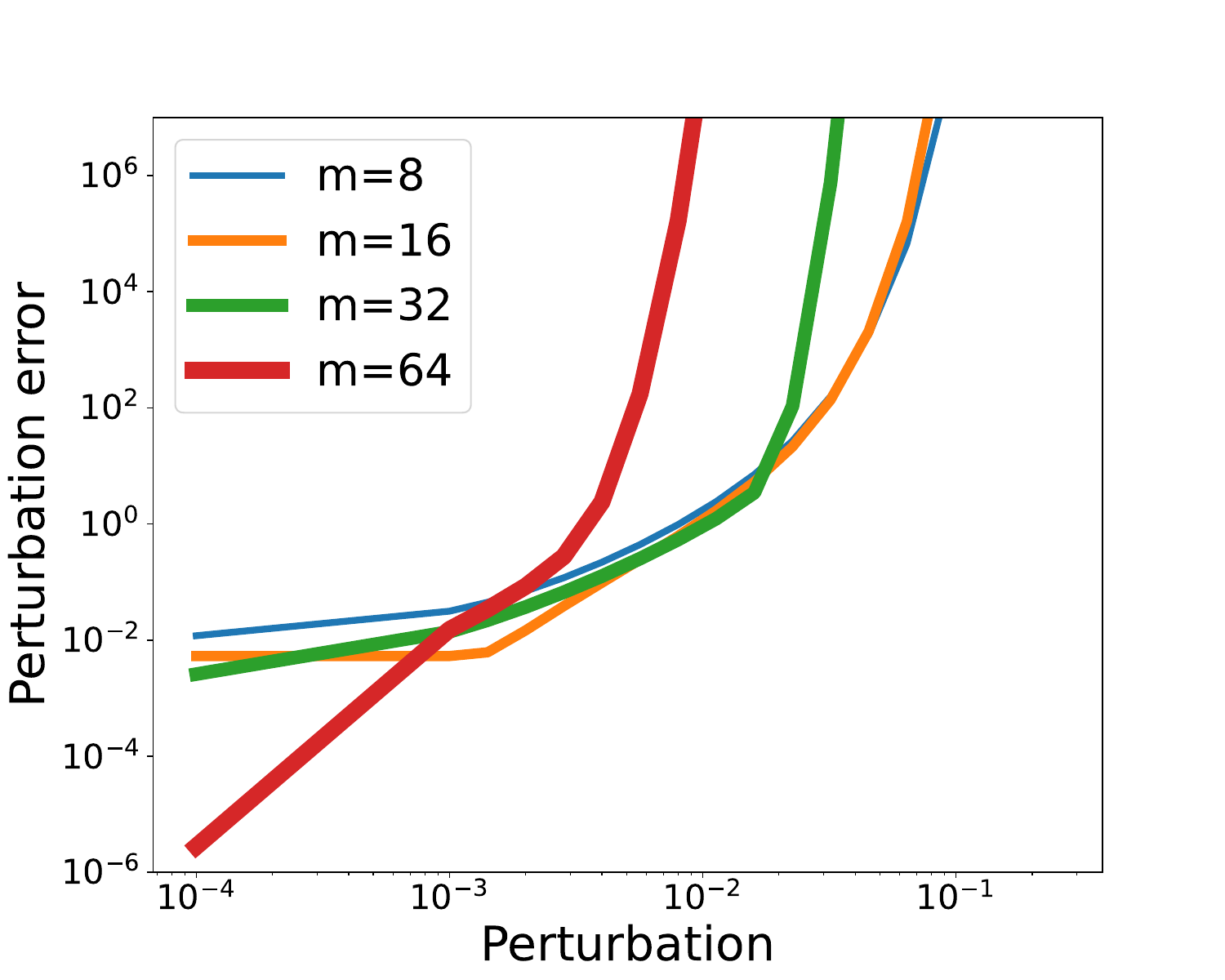}
    }
    \subfigure[][SoftplusSSM]{
        \includegraphics[width=0.315\textwidth]{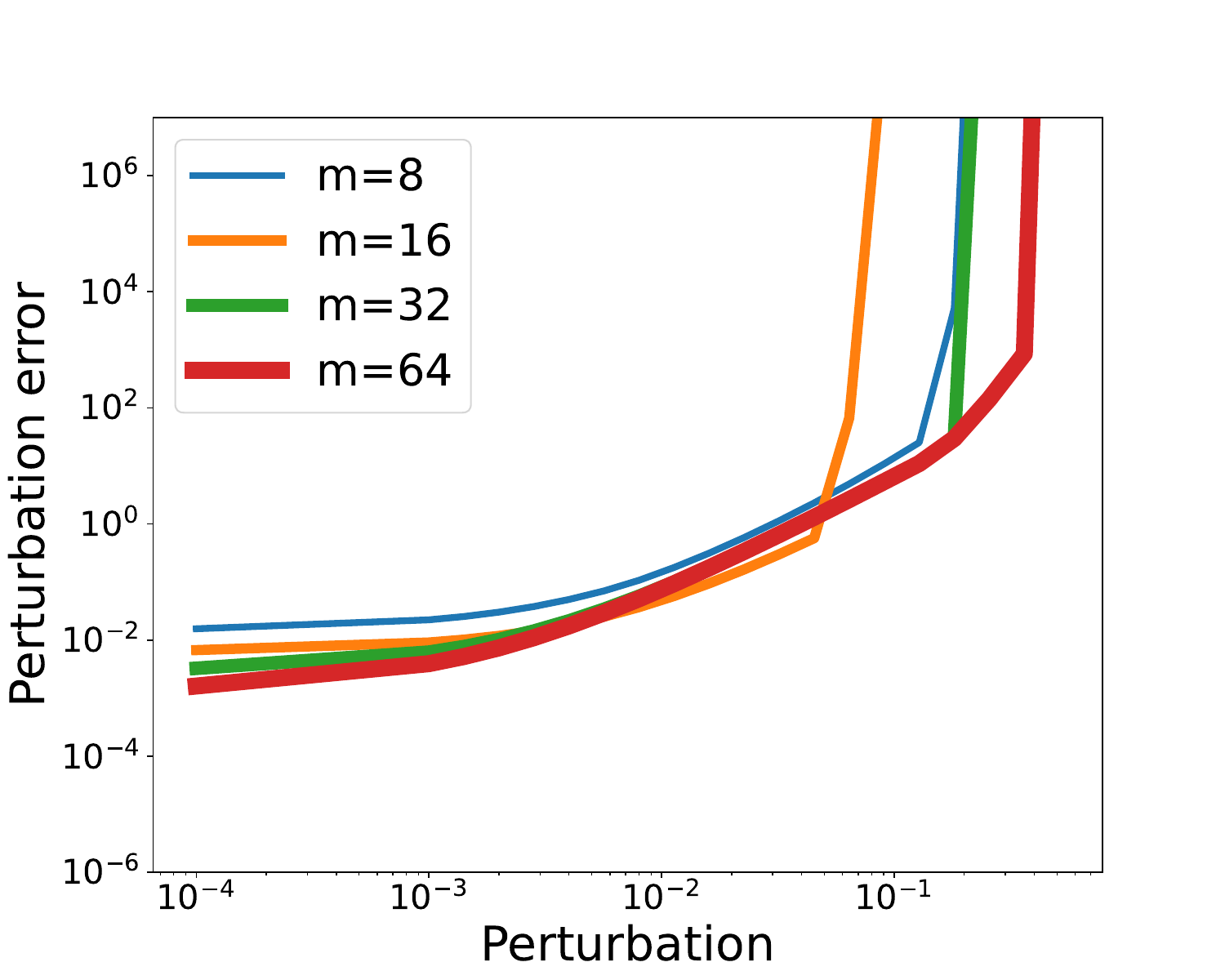}
    }
    \subfigure[][S4]{
        \includegraphics[width=0.315\textwidth]{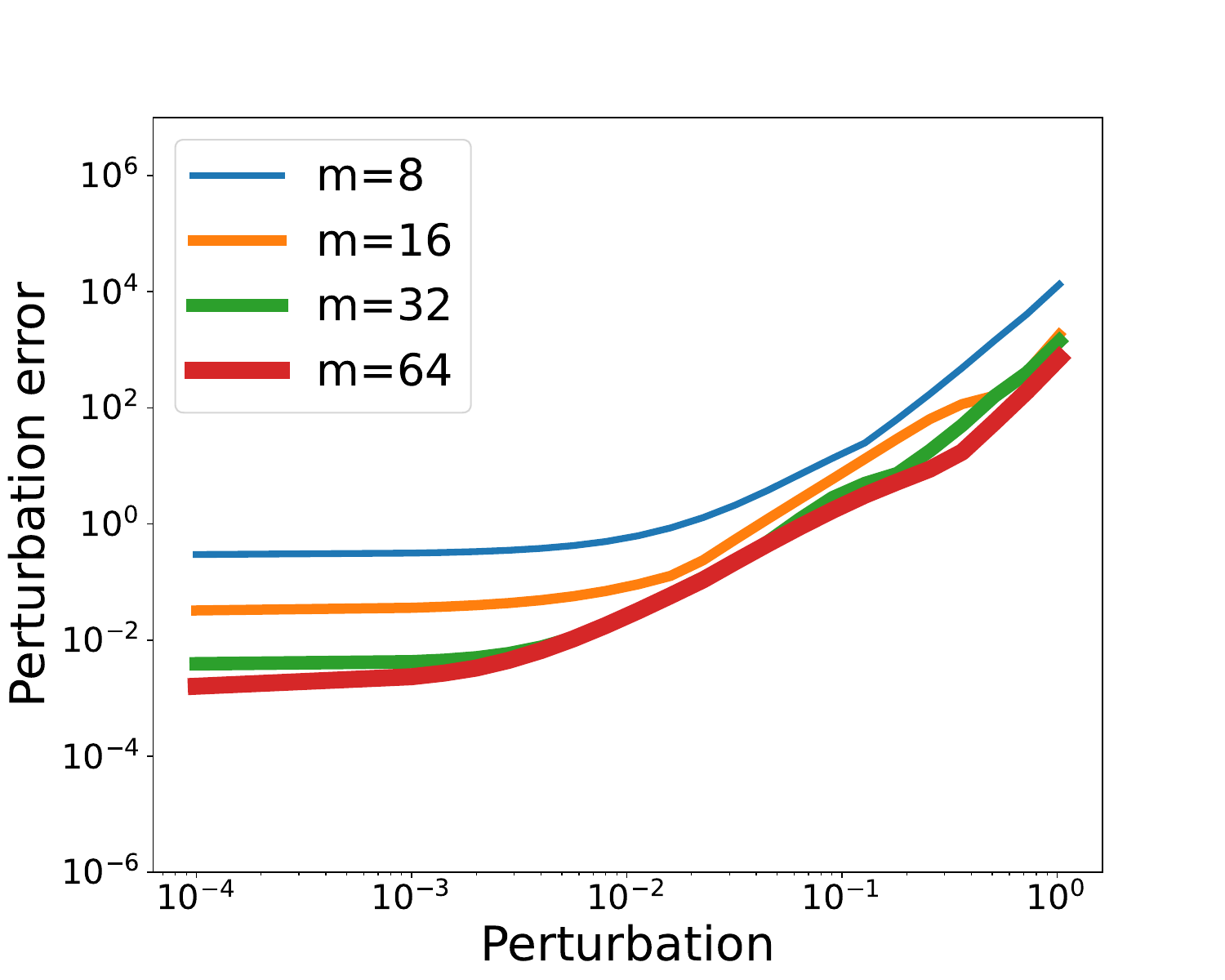}
    }
    \caption{
        State-space models without stable reparameterization cannot approximate targets with polynomial decaying memory. 
        In (a), the intersection of lines are shifting towards left as the hidden dimension $m$ increases.
        In (b), SSMs using softplus reparameterization has a stable approximation.
        In (c), S4 can stably approximate the target with better stability. 
    }
    \label{fig:CoM_SSM}
}
\end{figure*}

\begin{table*}[htb!]
    \caption{
    Impact of stable reparameterizations in approximation and stable approximation. 
    As the reparameterization does not change the hypothesis space of SSMs, both vanilla SSMs and StableSSM are universal approximators. 
    Vanilla SSMs can only stably approximate targets with exponential decay while StableSSM can stably approximate any targets with decaying memory. 
    }
    \label{table:comparison_of_reparameterization_in_approximation}
    \centering
    \begin{tabular}{c|cc}
    \toprule
             & Approximation & Stable approximation    \\
    \midrule
    Without reparameterization (Vanilla SSM)        & Universal~\citep{wang2023.StatespaceModelsLayerwisea} & Not universal (Thm \ref{thm:com_ssm}) \\
    With stable reparameterization (StableSSM)    & Universal~\citep{wang2023.StatespaceModelsLayerwisea} & Universal (Thm \ref{thm:stable_reparameterization_linear_functional}) \\
    \bottomrule
    \end{tabular}
\end{table*}

In this section, we present a theoretical theorem demonstrating that the state-space structure does not alleviate the ``curse of memory'' phenomenon. 
State-space models consist of alternately stacked linear RNNs and nonlinear activations. 
Our result is established for both the shallow case and deep case~(\cref{remark:generalization_to_multilayer}).
As recurrent models, SSMs without reparameterization continue to exhibit the commonly observed phenomenon of exponential memory decay, as evidenced by empirical findings~\citep{wang2023.StatespaceModelsLayerwisea}.

\begin{assumption}
We assume the hidden states remain uniformly bounded for any input sequence $\mathbf{x}$, irrespective of the hidden dimensions $m$.
Specifically, this can be expressed as
\begin{equation}
\label{eq:uniformly_bounded_hiddens}
    \sup_m \sup_{t} |h_t|_{\infty} < \infty.  
\end{equation}
\end{assumption}

\begin{assumption}
We focus on strictly increasing, continuously differentiable nonlinear activations with Lipschitz constant $L_0$.
This property holds for activations such as tanh, sigmoid, softsign $\sigma(z) = \frac{z}{1+|z|}$. 
\end{assumption}

\begin{theorem}[Curse of memory in SSMs]
    \label{thm:com_ssm}
    Assume $\mathbf{H}$ is a sequence of bounded, causal, continuous, regular and time-homogeneous functionals on $\mathcal{X}$ with decaying memory.
    Suppose there exists a sequence of state-space models $\{\widehat{\mathbf{H}}(\cdot, \theta_m)\}_{m=1}^{\infty}$ $\beta_0$-stably approximating $\mathbf{H}$ in the norm defined in \cref{eq:Sobolev_norm}.
    Assume the model weights are uniformly bounded: $\theta_{\max} := \sup_m |\theta_{m}|_2 < \infty$.
    Then the memory function $\mathcal{M}(\mathbf{H})(t)$ of the target decays exponentially:
    \begin{equation}
        \mathcal{M}(\mathbf{H})(t) \leq (d+1) L_{0} \theta_{\max}^2 e^{-\beta t}, \quad t \geq 0, \beta < \beta_0.
    \end{equation}
    Here $d$ is the dimension of input sequences. 
    When generalized to multi-layer cases, the memory function bound induced from $\ell$-layer SSM is: For some polynomial $P(t)$ with degree at most $l-1$
    \begin{equation}
        \mathcal{M}(\mathbf{H})(t) \leq (d+1) L_{0}^{\ell} \theta_{\max}^{\ell+1} P(t) e^{-\beta t}, \quad t \geq 0, \beta < \beta_0.
    \end{equation}
\end{theorem}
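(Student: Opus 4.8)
The plan is to reduce the claim to a statement about the spectrum of $\Lambda$: I will (i) compute the memory function of a single state-space model explicitly, (ii) bound its decay by the spectral margin of $\Lambda$, (iii) pass this bound to the target using $E(0)=0$, and (iv) show that $\beta_0$-stable approximation forces that margin to be at least $\beta_0$. Write $\lambda_1,\dots,\lambda_m$ for the diagonal entries of $\Lambda$, call a mode $i$ \emph{active} if the $i$-th row of $U$ or $b_i$ is nonzero (inactive modes influence neither $\hat y_t$ nor $\mathcal{M}$), and set the margin $\gamma_m:=-\max\{\lambda_i: i\text{ active}\}$.

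First I would solve the linear hidden dynamics on the Heaviside input $\mathbf{u}^{x}$. Since an approximating model must be a well-defined bounded functional, all active $\lambda_i<0$, and for $t\ge0$ one obtains $h_t=\Lambda^{-1}(e^{\Lambda t}-I)Ux-\Lambda^{-1}b$, hence $\tfrac{dh_t}{dt}=e^{\Lambda t}Ux$ and $\tfrac{d}{dt}\widehat H_t(\mathbf{u}^{x})=\sum_{i} c_i\,\sigma'((h_t)_i)\,e^{\lambda_i t}(Ux)_i$. Using $0<\sigma'\le L_0$, $e^{\lambda_i t}\le e^{-\gamma_m t}$ for $t\ge0$, Cauchy--Schwarz, $|x|_2\le\sqrt d\,|x|_\infty$, and $|c|_2,|U|_2\le\theta_{\max}$, then dividing by $|x|_\infty+1$ and taking the supremum over $x$, gives $\mathcal{M}(\widehat{\mathbf{H}}(\cdot;\theta_m))(t)\le (d+1)L_0\theta_{\max}^2 e^{-\gamma_m t}$ (the exact dimensional constant is routine bookkeeping).

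To pass to the target, I would use that evaluating a functional and its time derivative on Heaviside inputs is controlled by the Sobolev norm — established through the point-wise continuity over Heaviside inputs referenced in the text — so that $|\mathcal{M}(\mathbf{H})(t)-\mathcal{M}(\widehat{\mathbf{H}}(\cdot;\theta_m))(t)|\le\|\mathbf{H}-\widehat{\mathbf{H}}(\cdot;\theta_m)\|_{W^{1,\infty}}$ uniformly in $t$ (the memory function is a supremum of difference quotients, each dominated by this norm). Because $E(0)=0$ forces $\|\mathbf{H}-\widehat{\mathbf{H}}(\cdot;\theta_m)\|_{W^{1,\infty}}\to0$, I obtain $\mathcal{M}(\mathbf{H})(t)\le\limsup_m (d+1)L_0\theta_{\max}^2 e^{-\gamma_m t}$, so everything reduces to proving $\liminf_m\gamma_m\ge\beta_0$.

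The heart of the argument — and the step I expect to be hardest — is this spectral-margin bound, where the bounded nonlinearity forces a different mechanism than for linear RNNs: a right-half-plane eigenvalue does not blow up the saturated output $c^\top\sigma(h_t)$, so instability must be detected elsewhere. The key observation is that with the boundary condition $h_{-\infty}=0$ the hidden state is the convolution $h_t=\int_{-\infty}^t e^{\Lambda(t-s)}(Ux_s+b)\,ds$, and if an active mode has $\lambda_i\ge0$ this integral diverges (already the bias term $\int_{-\infty}^t e^{\lambda_i(t-s)}b_i\,ds$ is infinite), so the perturbed model fails to be a bounded, regular functional and its distance to $\mathbf{H}$ is infinite. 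Now suppose $\liminf_m\gamma_m<\beta_0$; then for some $\beta<\beta_0$ infinitely many models have an active $\lambda_{i}>-\beta$, and perturbing that single diagonal entry upward by at most $\beta$ produces $\tilde\lambda_i>0$ within the admissible ball, whence $\|\mathbf{H}-\widehat{\mathbf{H}}(\cdot;\tilde\theta_m)\|_{W^{1,\infty}}=\infty$ and therefore $E_m(\beta)=\infty$ for infinitely many $m$, giving $E(\beta)=\limsup_m E_m(\beta)=\infty$. This contradicts the continuity (in particular, finiteness) of $E$ on $[0,\beta_0]$ required by $\beta_0$-stable approximation. Hence $\liminf_m\gamma_m\ge\beta_0$, so for every $\beta<\beta_0$ we have $\gamma_m\ge\beta$ for all large $m$; substituting into the bound of the previous paragraph yields $\mathcal{M}(\mathbf{H})(t)\le(d+1)L_0\theta_{\max}^2 e^{-\beta t}$ for all $t\ge0$, as claimed. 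The remaining care is bookkeeping: isolating active modes so the margin is meaningful, and justifying the Heaviside-input manipulations within the $C_0$ input framework via the cited point-wise continuity.
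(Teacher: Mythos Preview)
Your overall architecture matches the paper's: compute the model memory function on Heaviside inputs, bound it by the spectral margin of $\Lambda$, transfer this to $\mathbf{H}$ via the Sobolev approximation, and use the $\beta_0$-stability to force the margin to be at least $\beta_0$. The transfer step is handled differently but equivalently: you pass directly through $|\mathcal{M}(\mathbf{H})(t)-\mathcal{M}(\widehat{\mathbf{H}}_m)(t)|\le\|\mathbf{H}-\widehat{\mathbf{H}}_m\|_{W^{1,\infty}}$, whereas the paper packages this into a separate lemma (their Lemma~\ref{lemma:model_decay_target_decay}) that works via the change of variables $s=e^{-\beta_0 t}$ and a Cauchy-sequence argument in $C([0,1])$.

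The substantive divergence is in how you force the margin. The paper does \emph{not} argue that a nonnegative perturbed eigenvalue makes the convolution integral diverge. Instead it runs the following chain (their Lemma~\ref{lemma:decaying_v}): a perturbed model inside the $\beta_0$-ball is still a well-defined SSM with decaying memory, so $c^\top(\sigma'(\tilde h_t)\circ\tilde v_t)\to 0$; then one perturbs $c$ inside the same ball to a basis of $\mathbb{R}^m$, concluding $\sigma'(\tilde h_t)\circ\tilde v_t\to 0$; finally the standing assumption of uniformly bounded hidden states together with the strict monotonicity of $\sigma$ gives $\inf\sigma'(\tilde h_t)>0$, hence $\tilde v_t\to 0$, and a Hartman--Grobman-type lemma forces $\widetilde{\Lambda}$ Hurwitz. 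Your route is shorter and never invokes the strictly increasing hypothesis on $\sigma$ or the bounded-hidden-state assumption~\eqref{eq:uniformly_bounded_hiddens}; the paper's route, by contrast, is where these hypotheses actually earn their keep and is what the authors flag as the new ingredient over the earlier RNN results.

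Your argument is essentially workable, but the step you yourself call the hardest is not quite closed. You write that with $\tilde\lambda_i\ge 0$ the perturbed model ``fails to be a bounded, regular functional''; for a saturating activation this is false as stated, since $c^\top\sigma(h_t)$ stays bounded no matter how large $h_t$ becomes, and for tanh-like $\sigma$ the product $\sigma'(h_{i,t})\,e^{\tilde\lambda_i t}$ even tends to zero. What you really need is the cleaner statement that the ODE with initial condition $h_{-\infty}=0$ has \emph{no finite solution} once an active mode has $\tilde\lambda_i\ge 0$, so $\widehat{\mathbf{H}}(\cdot;\tilde\theta_m)$ is not defined as a functional on $\mathcal{X}$ at all; you should then make explicit the convention that this forces $E_m(\beta)=\infty$. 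Two loose ends to tie: (i) your bias-divergence sentence assumes $b_i\neq 0$, but ``active'' allows $b_i=0$ with a nonzero row of $U$ --- either also perturb $b_i$ within the same ball (the weight norm is a max, so this is free) or observe that inputs in $C_0$ decaying slowly at $-\infty$ already make $\int_{-\infty}^t e^{\tilde\lambda_i(t-s)}(Ux_s)_i\,ds$ diverge; (ii) the perturbation must stay in the \emph{closed} ball of radius $\beta$, so you should perturb $\lambda_i$ by some $\beta'<\beta$ with $\lambda_i+\beta'>0$, which your hypothesis $\lambda_i>-\beta$ allows.
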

The proof of \cref{thm:com_ssm} is provided in \cref{subsec:proof_for_com}.
The (continuous-time) stability boundary (discussed in \cref{remark:stability_boundary}) for $\Lambda$ in state-space models (\cref{eq:2}) is $\max_{i \in [m]}\lambda_i(\Lambda) < 0$.
This boundary comes from the stabiltiy criterion for linear time-invariant system.
Compared with previous results~\citep{li2020.CurseMemoryRecurrent,wang2023.InverseApproximationTheory}, the main proof difference comes from \cref{lemma:decaying_v} as the activation is in the readout $y_t = c^\top \sigma(h_t)$. 
Our results provide a more accurate characterization of memory decay, in contrast to previous works that only offer qualitative estimates.
A consequence of \cref{thm:com_ssm} is that if the target exhibits a non-exponential decay (e.g., polynomial decay), the recurrent weights converge to a stability boundary, thereby making the approximation unstable.
Finding optimal weights can become challenging with gradient-based optimization methods, as the optimization process tends to become unstable with the increase of model size.
The numerical verification is presented in \cref{fig:CoM_SSM} (a).
The lines intersect and the intersections points shift towards the 0, suggesting that the stable radius $\beta_0$ does not exist.
Therefore SSMs without reparameterization cannot stably approximate targets with polynomial decaying memory.

\subsection{Stable reparameterization and its advantage in approximation}

The proof of \cref{thm:com_ssm} suggests that the ``curse of memory'' arises due to the recurrent weights approaching a stability boundary. 
Additionally, our numerical experiments (in \cref{fig:CoM_SSM} (c)) show that while state-space models suffer from curse of memory, the commonly used S4 layer (with exponential reparameterization) ameliorates this issue. 
However, it is not a unique solution.
Our findings highlight that the foundation to achieving a stable approximation is the stable reparameterization method, which we define as follows:

\begin{definition}[Stable reparameterization]
\label{def:stable_reparameterization}
    We say a reparameterization scheme $f: \mathbb{R} \to \mathbb{R}$ is stable if there exists a continuous function $g$ such that: $g: [0, \infty) \to [0, \infty), g(0)=0$:
    \begin{equation}
    \label{eq:stable_reparameterization}
        \sup_w \left[ |f(w)| \sup_{|\tilde{w} - w| \leq \beta} \int_0^\infty \left | e^{f(\tilde{w}) t}  - e^{f(w) t} \right | dt \right ] \leq g(\beta).
    \end{equation}
    For example, commonly used reparameterization~\citep{gu2022.EfficientlyModelingLonga,smith2023.SimplifiedStateSpace} such as $f(w)=-e^{w}$, $f(w)=-\log(1+e^{w})$ are all stable. 
    Verifications are provided in \cref{remark:verification_of_stable_reparameterization}.
\end{definition}

As depicted in \cref{fig:CoM_SSM} (b), state-space models with stable reparameterization can approximate targets exhibiting polynomial decay in memory.
In particular, we prove that under a simplified perturbation setting (solely perturbing the recurrent weights), any linear functional can be stably approximated by linear RNNs.
This finding under simplified setting is already significant as the instability in learning long-term memory mainly comes from the recurrent weights.

\begin{theorem}[Existence of stable approximation by stable reparameterization]
\label{thm:stable_reparameterization_linear_functional}
    For \textbf{any} bounded, causal, continuous, regular, time-homogeneous linear functional $\mathbf{H}$, assume $\mathbf{H}$ is approximated by a sequence of linear RNNs $\{\widehat{\mathbf{H}}(\cdot, \theta_m)\}_{m=1}^{\infty}$ with stable reparameterization, then this approximation is a stable approximation.
\end{theorem}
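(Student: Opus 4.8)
The plan is to reduce everything to the memory-kernel (convolution) representation of linear functionals and then show that the stable reparameterization condition in \cref{eq:stable_reparameterization} is \emph{exactly} the estimate needed to control the perturbation error. First I would invoke the Riesz representation theorem (\cref{thm:riesz_representation_theorem}): a bounded, causal, continuous, regular, time-homogeneous linear functional admits a representation $H_t(\mathbf{x}) = \int_0^\infty \rho(s)^\top x_{t-s}\,ds$ with $\rho \in L^1((0,\infty);\mathbb{R}^d)$, and its memory function is $\mathcal{M}(\mathbf{H})(t) = |\rho(t)|_1$. A diagonal linear RNN with reparameterized recurrent weights $\Lambda = \mathrm{diag}(f(w_1),\dots,f(w_m))$, input matrix $U$ and readout $c$ realizes the kernel $\widehat\rho_m(t) = \sum_{i=1}^m c_i\, U_{i\cdot}\, e^{f(w_i)t}$, so the whole problem becomes one of approximating $\rho$ by exponential sums and controlling how that approximation degrades under perturbations $w_i \mapsto \tilde w_i$ of the recurrent weights.

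Second I would construct the approximating sequence and verify $E(0)=0$. Since the coefficients $c_i, U_{i\cdot}$ and the time constants $1/|f(w_i)|$ are free, a density argument (discretizing the integral/measure representation of $\rho$ supplied by Riesz) yields kernels $\widehat\rho_m \to \rho$ in the norm induced by \cref{eq:Sobolev_norm}, which gives $E(0)=0$. The \emph{crucial} extra requirement I would build into the construction is a \emph{uniform} bound on the weighted coefficient sum $C := \sup_m \sum_{i=1}^m \frac{|c_i|\,|U_{i\cdot}|}{|f(w_i)|} < \infty$. This is natural: because $f(w_i)<0$ one has $\int_0^\infty e^{f(w_i)t}\,dt = 1/|f(w_i)|$, so the sum is just the $L^1$ mass of the kernel built from absolute-valued coefficients, and it stays finite since the target functional is bounded (for a completely monotone $\rho$ it converges to $\|\rho\|_{L^1}$).

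Third, and this is the heart of the argument, I would bound the perturbation error \cref{eq:perturbed_error} in the simplified setting where only the recurrent weights move, $|\tilde w_i - w_i| \le \beta$. The perturbed kernel difference is $\Delta\rho(t) = \sum_i c_i U_{i\cdot}\big(e^{f(\tilde w_i)t} - e^{f(w_i)t}\big)$; since $\Delta\rho(0)=0$, the value term of the Sobolev norm is dominated by $\|\Delta\rho\|_{L^1} \le \sum_i |c_i|\,|U_{i\cdot}| \int_0^\infty |e^{f(\tilde w_i)t} - e^{f(w_i)t}|\,dt$. Applying \cref{def:stable_reparameterization} termwise bounds each integral by $g(\beta)/|f(w_i)|$, so
\[
\|\Delta\rho\|_{L^1} \;\le\; g(\beta)\sum_{i=1}^m \frac{|c_i|\,|U_{i\cdot}|}{|f(w_i)|} \;\le\; C\,g(\beta).
\]
The weight $|f(w_i)|$ in \cref{eq:stable_reparameterization} cancels exactly against the time constant $1/|f(w_i)|$ in the kernel norm, which is the entire point of the definition. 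By the triangle inequality $E(\beta) \le E(0) + C\,g(\beta) = C\,g(\beta)$, and since $g$ is continuous with $g(0)=0$ this yields $E(0)=0$ together with continuity of $E$ at $\beta=0$; running the same kernel-difference estimate between two perturbation radii (using continuity of $f$ to keep the coefficient sum controlled) upgrades this to continuity on all of $[0,\beta_0]$, which is precisely \cref{thm:stable_reparameterization_linear_functional}.

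The main obstacle I anticipate is twofold. The quantitative core — the cancellation above — is clean, but it \emph{presupposes} the uniform coefficient bound $C<\infty$; securing it for an \emph{arbitrary} signed kernel $\rho$ (not merely completely monotone ones) requires choosing the exponential-sum approximation so that the total variation of its coefficients, weighted by time constants, does not blow up as $m\to\infty$, and this is where the genuine approximation-theoretic work lies. The second, more routine, difficulty is the derivative term $\|\tfrac{dH_t}{dt} - \tfrac{d\widehat H_t}{dt}\|_\infty$ in \cref{eq:Sobolev_norm}: after integration by parts it reduces to $|\Delta\rho(0)| + \|\Delta\rho'\|_{L^1}$, and although $\Delta\rho(0)=0$, the term $\|\Delta\rho'\|_{L^1}$ involves $\int_0^\infty |f(\tilde w_i)e^{f(\tilde w_i)t} - f(w_i)e^{f(w_i)t}|\,dt$, which does not match \cref{eq:stable_reparameterization} verbatim and needs a companion estimate on the derivative of the exponential, handled by the same $g(\beta)$-type bound once an extra factor is absorbed into the construction's coefficient control.
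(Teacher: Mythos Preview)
Your core argument is exactly the paper's: reduce to the $L^1$ kernel picture, write the perturbed--unperturbed kernel difference as $\sum_i c_i\big(e^{f(\tilde w_i)t}-e^{f(w_i)t}\big)$, apply the stable-reparameterization inequality \cref{eq:stable_reparameterization} termwise so that the factor $|f(w_i)|$ cancels against the time constant $1/|f(w_i)|$, and conclude $E_m(\beta)\le E_m(0)+C\,g(\beta)$ with $C=\sup_m\sum_i |c_i|/|f(w_i)|$. That cancellation is precisely the mechanism in the paper's proof.

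Two remarks on where you diverge. First, your step~2 over-reaches: the theorem \emph{assumes} an approximating sequence is given, so you need not construct one or argue density of exponential sums; the paper simply \emph{imposes} the uniform bound $\sup_m\sum_i |c_i|/|f(w_i)|<\infty$ as a well-posedness requirement on the given models (``to have a sequence of well-defined model, we require\ldots''), rather than deriving it from an approximation-theoretic construction. So the obstacle you flag about securing $C<\infty$ for arbitrary signed $\rho$ is real, but the paper sidesteps it by hypothesis. Second, your worry about the derivative term in the Sobolev norm is legitimate but not addressed by the paper either: the paper's proof works entirely with the $L^1$ kernel error $\int_0^\infty|\hat\rho_m-\rho|$, i.e.\ only the value part of the norm, and takes continuity of $E(\beta)$ from the bound $E(\beta)\le C\,g(\beta)$ together with continuity of $g$. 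Your instinct that the $\|\Delta\rho'\|_{L^1}$ term needs a companion estimate is correct in principle, but it lies outside what the paper actually proves.
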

The proof of \cref{thm:stable_reparameterization_linear_functional} is in \cref{subsec:proof_for_stable_reparameterization_linear_functional}. 
The generalization to nonlinear functionals with Volterra-Series representation can be similarly achieved (\cref{remark:existence_of_stable_approximation_for_nonlinear_functionals}). 
Compared to \cref{thm:com_ssm}, \cref{thm:stable_reparameterization_linear_functional} underscores the role of stable reparameterization in achieving stable approximation of nonlinear functional with long-term memory.
Although vanilla SSM and StableSSM operate within the same hypothesis space, StableSSM demonstrates better stability in approximating any decaying memory target (\cref{table:comparison_of_reparameterization_in_approximation}). 
In contrast, the vanilla SSM model is limited to stably approximate targets characterized by an exponential memory decay.

\begin{figure*}[ht!]{
    \centering
    \includegraphics[width=0.68\textwidth]{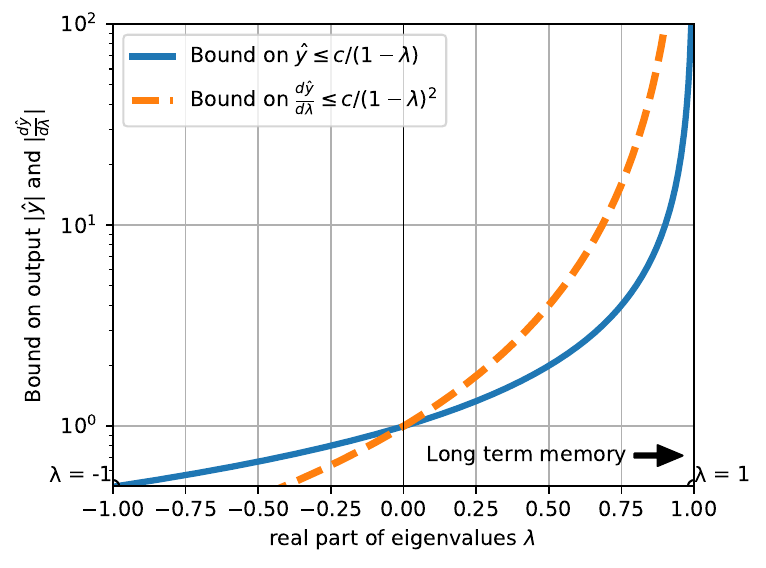}
    \caption{
    The scaling of layer output bound $|\hat{y}| \leq \frac{c}{1-\lambda}$ and the gradients $|\frac{d \hat{y}}{d\lambda}| \leq \frac{c}{(1-\lambda)^2}$. 
    The stability boundary is $\lambda=\pm 1$. 
    When the model adapts to learn long-term memory (as $\lambda$ approaches 1), the gradient experiences an increase that surpasses the rate of output growth. 
    Techniques like layer normalization are insufficient to address this issue of exploding gradients effectively.
    }
    \label{fig:grad_norm_stable_parameterization_v2}
}
\end{figure*}

\subsection{Optimization benefit of stable reparameterization}

In the previous section, the approximation benefit of stable reparameterizations in SSMs is discussed. 
Here we study the impact of different parameterizations on the optimization stability, in particular, the gradient scales. 

As pointed out by \citet{li2020.CurseMemoryRecurrent,li2022.ApproximationOptimizationTheory}, the approximation of linear functionals using linear RNNs can be reduced into the approximation of $L_1$-integrable memory function $\rho(t)$ via functions of the form $\hat{\rho}(t) = \sum_{i=1}^m c_i e^{-\lambda_i t}$. 
\begin{equation}
    \rho(t) \approx \sum_{i=1}^m c_i e^{-\lambda_i t}, \quad \lambda_i > 0. 
\end{equation}
Within this framework, $\lambda_i$ is interpreted as the decay mode.
Approaching this from the gradient-based optimization standpoint, and given that learning rates are shared across different decay modes, a fitting characterization for ``good parameterization'' emerges:
\textit{The gradient scale across different memory decays modes should be Lipschitz continuous with respect to the weights scale.}
\begin{equation}
    \label{eq:Good_parameterization}
    |\textrm{Gradient}|:= \left | \frac{\partial \textrm{Loss}}{\partial \lambda_i} \right | \leq L |\lambda_i|. 
\end{equation}
The Lipschitz constant is denoted by $L$. 
Without this property, the optimization process can be sensitive to the learning rate. 
We give a detailed discussion in \cref{sec:motivation_for_gow}.
In the following theorem, we first characterize the relationship between gradient norms and recurrent weight parameterization. 

\begin{theorem}[Parameterizations influence the gradient norm scale]
\label{thm:stable_parameterization_opt}
    Assume the target functional sequence $\mathbf{H}$ is being approximated by a sequence of SSMs $\widehat{\mathbf{H}}_m$. 
    If the (diagonal) recurrent weight matrix is parameterized via $f: \mathbb{R} \to \mathbb{R}: f(w) = \lambda$. 
    $w$ is the trainable weight while $\lambda$ is the eigenvalue of recurrent weight matrix $\Lambda$. 
    The gradient norm $G_f(w)$ of weight $w$ is upper bounded by the following function:
    \begin{equation}
    \label{eq:gradient_bound}
        G_{f}(w) := 
        \left | 
            \frac{\partial \textrm{Loss}}{\partial w} 
        \right | 
        \leq C_{\mathbf{H}, \widehat{\mathbf{H}}_m} \frac{|f'(w)|}{f(w)^2}.
    \end{equation}
    Here $C_{\mathbf{H}, \widehat{\mathbf{H}}_m}$ is independent of the parameterization $f$ provided that $\mathbf{H}, \widehat{\mathbf{H}}_m$ are fixed. 
    The discrete-time version is
    \begin{equation}
        G^D_{f}(w) := \left | \frac{\partial \textrm{Loss}}{\partial w} \right | \leq C_{\mathbf{H}, \widehat{\mathbf{H}}_m} \frac{|f'(w)|}{(1-f(w))^2}.
    \end{equation}
\end{theorem}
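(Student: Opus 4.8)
The plan is to reduce the statement to the single-mode memory-function approximation already described, use the chain rule to isolate the parameterization-dependent factor $f'(w)$, and bound what remains by a kernel integral that evaluates in closed form to $1/f(w)^2$. Following the reduction to $\rho(t)\approx\hat\rho(t)=\sum_{i=1}^m c_i e^{\lambda_i t}$ with eigenvalues $\lambda_i=f(w_i)$, I would fix the squared-error objective $\textrm{Loss}=\tfrac12\int_0^\infty(\hat\rho(t)-\rho(t))^2\,dt$ and differentiate with respect to a single eigenvalue $\lambda=\lambda_i$. Because $\lambda_i$ enters $\hat\rho$ only through its own exponential, $\partial_{\lambda}\hat\rho(t)=c_i\,t\,e^{\lambda t}$, and differentiating under the integral gives
\[ \frac{\partial\textrm{Loss}}{\partial\lambda}=\int_0^\infty\bigl(\hat\rho(t)-\rho(t)\bigr)\,c_i\,t\,e^{\lambda t}\,dt. \]

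First I would apply the chain rule $\partial\textrm{Loss}/\partial w=(\partial\textrm{Loss}/\partial\lambda)\,f'(w)$, so that $G_f(w)=|\partial_\lambda\textrm{Loss}|\,|f'(w)|$ and the whole dependence on the reparameterization is carried by $|f'(w)|$. Next I would estimate the residual factor by the $L^\infty$--$L^1$ H\"older pairing, pulling the error out in sup-norm and keeping the kernel in $L^1$:
\[ \left|\frac{\partial\textrm{Loss}}{\partial\lambda}\right|\le|c_i|\,\|\hat\rho-\rho\|_\infty\int_0^\infty t\,e^{\lambda t}\,dt. \]
The remaining integral evaluates exactly, $\int_0^\infty t\,e^{\lambda t}\,dt=1/\lambda^2$ for a stable mode $\lambda=f(w)<0$, which reproduces the claimed power. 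Setting $C_{\mathbf{H},\widehat{\mathbf{H}}_m}:=|c_i|\,\|\hat\rho-\rho\|_\infty$ then yields $G_f(w)\le C_{\mathbf{H},\widehat{\mathbf{H}}_m}\,|f'(w)|/f(w)^2$. The structural point I would emphasize is that, once $\mathbf{H}$ and $\widehat{\mathbf{H}}_m$ are fixed, both the readout weight $c_i$ and the residual $\hat\rho-\rho$ are determined independently of how $\lambda$ is generated from $w$, so $C_{\mathbf{H},\widehat{\mathbf{H}}_m}$ is genuinely parameterization-free. Using the cruder $L^\infty$--$L^1$ pairing rather than Cauchy--Schwarz is essential here, since the latter would produce the wrong scaling $|\lambda|^{-3/2}$.

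For the discrete-time bound I would repeat the computation with the discretized memory kernel $\hat\rho_k=\sum_i c_i\,\lambda_i^{\,k}$, where stable eigenvalues obey $\lambda_i=f(w_i)\in(0,1)$. Then $\partial_\lambda\lambda^k=k\lambda^{k-1}$, and the same sup-norm bound reduces the estimate to the geometric-type series $\sum_{k=0}^\infty k\,\lambda^{k-1}=1/(1-\lambda)^2$, giving $G^D_f(w)\le C_{\mathbf{H},\widehat{\mathbf{H}}_m}\,|f'(w)|/(1-f(w))^2$.

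The main obstacle is bookkeeping rather than computation: I must justify differentiation under the integral (and termwise differentiation of the series), which relies on the stability conditions $f(w)<0$ in continuous time and $|f(w)|<1$ in discrete time to guarantee convergence of $\int_0^\infty t e^{\lambda t}\,dt$ and $\sum_k k\lambda^{k-1}$, and I must track the eigenvalue-versus-decay-rate sign convention carefully so that $f(w)^2$ and $(1-f(w))^2$ appear with the correct positive sign. A secondary point is that the bound is naturally per-mode: since each eigenvalue appears only in its own summand of $\hat\rho$, the cross terms in $\partial_\lambda\hat\rho$ vanish, and $C_{\mathbf{H},\widehat{\mathbf{H}}_m}$ should be read as the constant attached to the coordinate being differentiated.
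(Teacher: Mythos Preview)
Your argument is correct and reaches the same key integral $\int_0^\infty t\,e^{\lambda t}\,dt=1/\lambda^2$ (and its discrete analogue), but the route differs from the paper's. The paper does \emph{not} use the squared $L^2$ loss on the kernel; instead it takes the loss to be the $L_\infty$ functional norm $\sup_t\|H_t-\widehat H_{m,t}\|_\infty$, which for linear functionals equals the $L^1$ norm $\int_0^\infty|\rho(s)-\hat\rho(s)|\,ds$ of the kernel error. It then differentiates this $L^1$ expression directly, using $|\partial_{w_j}|g(w_j,s)||\le|\partial_{w_j}g(w_j,s)|$ pointwise and dropping the $w_j$-independent summands, so that no residual factor ever appears: the constant comes out as $C_{\mathbf H,\widehat{\mathbf H}_m}=|c_j|$ rather than your $|c_j|\,\|\hat\rho-\rho\|_\infty$. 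Your H\"older splitting is cleaner and matches the MSE loss used in practice, but it buys the bound at the price of an extra error-dependent factor (and implicitly requires $\rho\in L^\infty$, not just $L^1$); the paper's choice of loss yields a tighter, error-free constant and avoids that regularity assumption. Both constants are legitimately parameterization-independent once $\mathbf H$ and $\widehat{\mathbf H}_m$ are fixed, so either route proves the theorem as stated.
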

Refer to \cref{subsec:stable_parameterization_opt} for the proof of Theorem \ref{thm:stable_parameterization_opt}.
In \cref{subsec:Comparison_of_different_recurrent_weights} we summarize common reparameterization methods and corresponding gradient scale functions. 

\begin{remark}[Generalization to multi-layer models]
    We do not prove the gradient bound result for multi-layer case in the paper, here we discuss the idea to genearlize it: 
    Consider a specific layer in a multi-layer model, without loss of generality we also have the boundedness of result from the previous layer and expected inputs for the next layer. 
    If we take the results from previous layer as the inputs and treat the expected inputs for next layer as the outputs, the gradient of recurrent weights for this layer also observe the same gradient norm bound with form in \cref{eq:gradient_bound}. 
    This comes from the fact that the gradient of the selected layer remains unchanged, regardless of whether the remaining layers are frozen or not. 
\end{remark}

\subsection{On the ``best'' parameterization in stability sense}
\label{sec:ode_solution_best_parameterization}

\begin{figure*}[t!]{
        \centering
        \subfigure[][Linear functionals]{
            \includegraphics[width=0.48\textwidth]{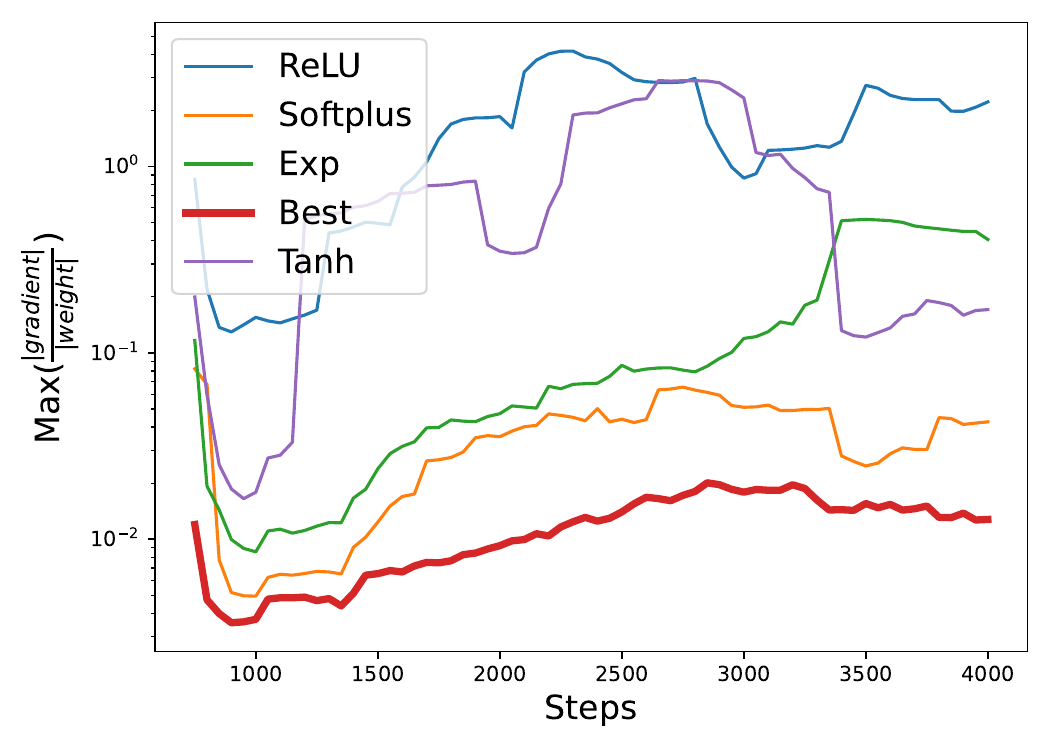}
        }
        \subfigure[][Language model]{
            \includegraphics[width=0.48\textwidth]{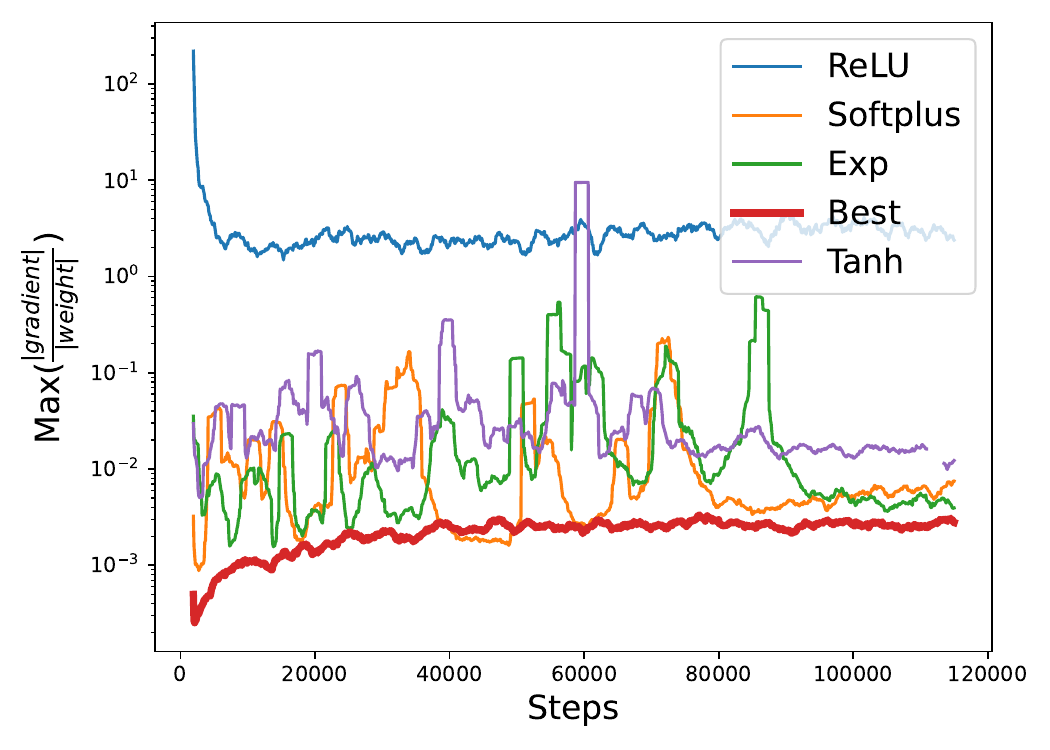}
        }
        \caption{
            In panel (a), in the learning of linear functionals of polynomial decaying memory, the gradient-over-weight scale range during the training of state-space models. 
            It can be seen the ``best''discrete parameterization $f(w) = 1 - \frac{1}{w^2 + 0.5}$ achieves the smallest gradient-over-weight scale. 
            Such property is desirable when a large learning rate is used in training. 
            The ``best'' reparameterization $f(w)=1-\frac{1}{w^2+0.5}$ maintains the smallest $\max(\frac{|\textrm{grad}|}{|\textrm{weight}|})$ which is crucial for the training stability. 
            Similar results can be observed in the language modelling task as in panel (b). 
        }
        \label{fig:loss_and_grad_norm_stable_parameterization_in_LF}
        }
    \end{figure*}
       
According to the criterion given in \cref{eq:Good_parameterization}, the ``best'' stable reparameterization should satisfy the following equation for some constant $L > 0$.
\begin{equation}
\label{eq:ode_for_best_parameterization}
    G_f(w) \leq C_{\mathbf{H}, \widehat{\mathbf{H}}_m} \frac{|f'(w)|}{f(w)^2} = L |w|.
\end{equation}
Based on the criterion, a sufficient condition for the above criterion is to find some function $f$ that satisfies the following equation for some real $a, b \in \mathbb{R}$:
\begin{align}
    \frac{f'(w)}{f(w)^2} & = \frac{d(-\frac{1}{f(w)})}{dw} = 2aw,\\
    \frac{1}{f(w)} & = -(a w^2 + b) \\
    \Rightarrow f(w) & = -\frac{1}{a w^2 + b}.
\end{align}
The first equation is achieved by integrating the function $\frac{f'(w)}{f(w)^2}$. 
Therefore the ``best'' parameterization under the assumption of the Lipschitz property of gradient is characterized by the function with two degrees of freedom:
By stability requirement $f(w) \leq 0$ for all $w$
\begin{equation}
    f(w) = -\frac{1}{aw^2 + b}, \quad a > 0, b \geq 0.
\end{equation}
Similarly, the discrete case gives the solution
$f(w) = 1-\frac{1}{a w^2 + b}.$
The stability of linear RNN further requires $a > 0$ and $b \geq 0$. 
We choose $a=1, b=0.5$ because this ensures the stability of the hidden state dynamics and stable approximation in \cref{eq:stable_reparameterization}. 
Notice that $\lim_{w \to 0} 1 - \frac{1}{w^2 + 0.5} = -1$ which does not cross the stability boundary $\lambda=-1$.
It can be seen in \cref{fig:grad_norm_stable_parameterization} that, compared with direct and exponential reparameterizations, the softplus reparameterization is generally milder in this gradient-over-weight criterion.
The ``best'' parameterization is optimal in the sense it has a bounded gradient-over-weight ratio across different weights $w$ (different eigenvalues $\lambda$).

\begin{remark}
    Apart from the reparameterization method, a simple yet effective method is gradient clipping. 
    However, clipped gradient is biased there the training effectiveness of the gradient descent might be reduced. 
    In contrast, the reparameterization is changing the scale of the gradient descent by introducing pre-conditioning term $\frac{f'(w)}{f(w)^2}$. 
\end{remark}

\begin{figure*}[bht!]{
    \centering
    \subfigure[][Gradient-weight-ratio at initialization]{
    \includegraphics[width=0.48\textwidth]{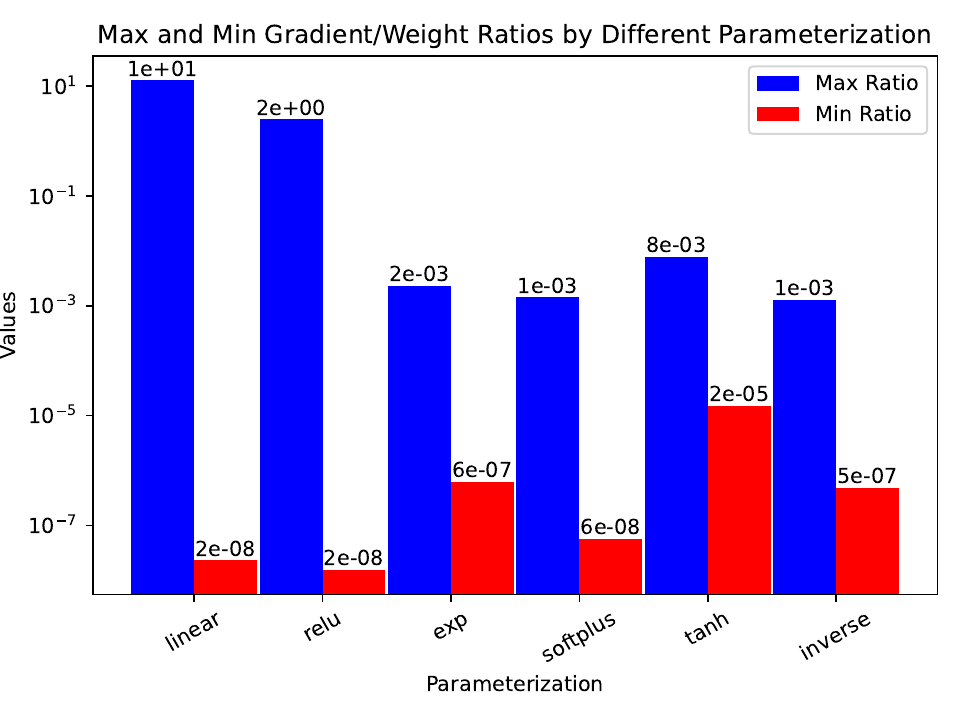}
    }
    \subfigure[][Training loss]{
    \includegraphics[width=0.48\textwidth]{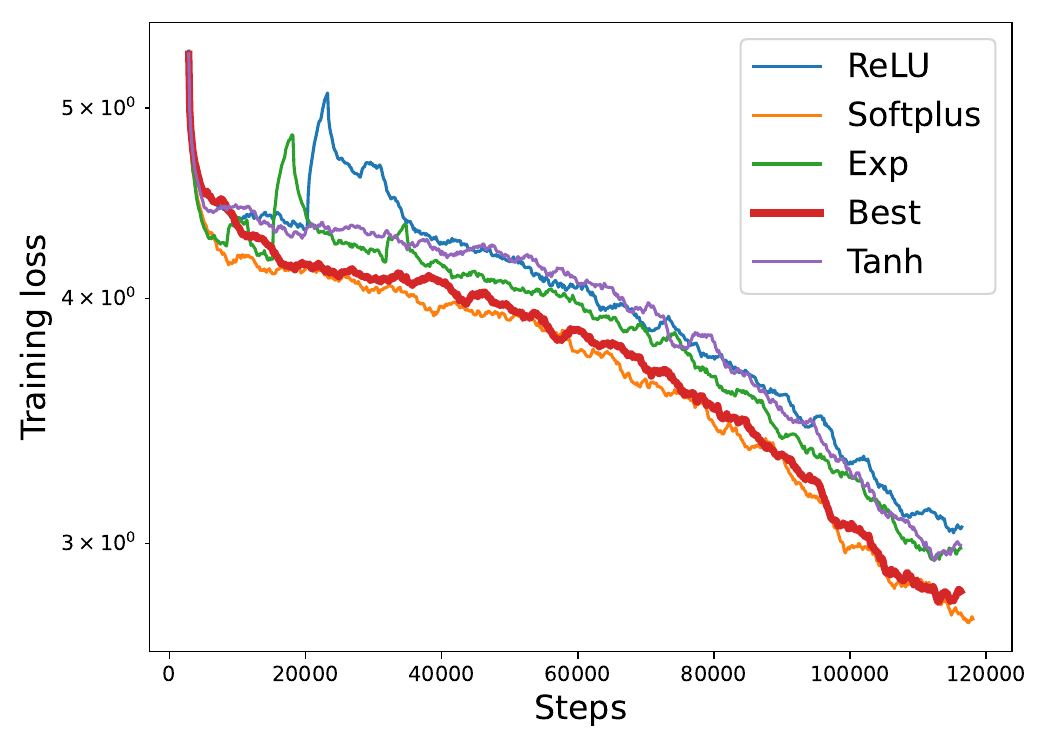}
    }
    \caption{
        Language models on WikiText-103. 
        In the left panel (a), we show the gradient-over-weight ratio ranges for different parameterizations of recurrent weights in state-space models. 
        The eigenvalues $\lambda$ are initialized to be the same while the only difference is the reparameterization function $f$.
        In the right panel (b), the ``Best'' parameterization is more stable than the ReLU and exponential reparameterizations. 
        Additional experiments for different learning rates are provided in \cref{fig:1LR2LR5LR}. 
        }
    \label{fig:grad_norm_stable_parameterization_language_model}
    }
\end{figure*}

\section{Numerical verifications}
\label{nunmerical_verifications}

Based on the above analyses, we verify the theoretical statements over synthetic tasks and language models using WikiText-103. 
The additional numerical details are provided in \cref{appendix:numerical_details}. 

\subsection{Synthetic tasks}
\label{subsec:synthetic_tasks}

Linear functionals have a clear structure, allowing us to study the differences of parameterizations. 
Similar to \citet{li2020.CurseMemoryRecurrent} and \citet{wang2023.InverseApproximationTheory}, we consider linear functional targets $\mathbf{H}$ with following polynomial memory function $\rho(t) = \frac{1}{(t+1)^{1.1}}$: 
$y_t = H_t(\mathbf{x}) = \int_{-\infty}^t \rho(t-s) x_s ds.$
We use the state-space models with tanh activations to learn the sequence relationships. 
In \cref{fig:loss_and_grad_norm_stable_parameterization_in_LF} (a), the eigenvalues $\lambda$ are initialized to be the same while the only difference is the reparameterization function $f(w)$.
Training loss across different reparameterization schemes are similar but the gradient-over-weight ratio across different parameterization schemes are different in terms of the scale.

\begin{table*}[ht!]
    \centering
    \caption{
    Comparison of stability of different parameterizations over MNIST. 
    The experiments conducted on the MNIST and CIFAR10 datasets were replicated three times, with the standard deviation of the test loss indicated in parentheses.}
    \begin{tabular}{|c|c|c|c|c|}
    \hline
    LR        & Direct                & Softplus               & Exp                   & Best                 \\ \hline
    5e-6      & 2.314384 (7.19932e-05)& 2.241642 (0.001279)    & 2.241486 (0.001286)   & \textbf{2.241217} (0.001297)   \\ \hline
    5e-5      & 2.304331 (2.11817e-07)& 0.779663 (0.001801)    & 0.774661 (0.001685)   & \textbf{0.765220} (0.001352)   \\ \hline
    5e-4      & 2.303190 (1.66387e-06)& 0.094411 (0.000028)    & 0.093418 (0.000024)   & \textbf{0.091924} (0.000019)   \\ \hline
    5e-3      & NaN                   & 0.023795 (0.000004)    & 0.023820 (0.000003)   & \textbf{0.023475} (0.000002)   \\ \hline
    5e-2      & NaN                   & 0.802772 (1.69448)     & 0.868350 (1.55032)    & \textbf{0.089073} (0.000774)   \\ \hline
    5e-1      & NaN                   & 2.313510 (0.000014)    & 2.314244 (0.000025)   & \textbf{2.185477} (0.048238)   \\ \hline
    5e+0      & NaN                   & NaN                    & NaN                   & \textbf{199.013813} (50690.6)  \\ \hline
    \end{tabular}
    \label{tab:mnist_train_loss}
\end{table*}

\subsection{Language models}
\label{subsec:language_models}

\begin{table*}[ht!]
    \caption{Comparison of stability of different parameterizations over CIFAR10}
    \centering
    \begin{tabular}{|c|c|c|c|c|}
    \hline
    LR        & Direct               & Softplus              & Exp                  & Best                 \\ \hline
    5e-6      & NaN                  & 1.745752 (0.000006)   & 1.745816 (0.000009)  & \textbf{1.745290} (0.000011)  \\ \hline
    5e-5      & NaN                  & 1.220859 (0.000008)   & 1.218064 (0.000008)  & \textbf{1.215510} (0.000014)  \\ \hline
    5e-4      & NaN                  & 0.883649 (0.000898)   & \textbf{0.866817} (0.000328)  & 0.870412 (0.000442)  \\ \hline
    5e-3      & NaN                  & 1.449352 (0.000414)   & 1.567662 (0.021489)  & \textbf{1.364697} (0.013849)  \\ \hline
    5e-2      & NaN                  & 1.942372 (0.011317)   & 1.846173 (0.007990)  & \textbf{1.713892} (0.013426)  \\ \hline
    5e-1      & NaN                  & 37.802437 (3776.6383) & \textbf{2.296230} (0.000984)  & 2.554265 (0.168649)  \\ \hline
    5e+0      & NaN                  & \textbf{540.621033} (NaN)      & NaN                  & 615.374522 (30795.4) \\ \hline
    \end{tabular}
    \label{tab:cifar10_train_loss}
\end{table*}

In addition to the synthetic dataset of linear functionals, we further justify \cref{thm:stable_parameterization_opt} by examining the gradient-over-weight ratios for language models using state-space models (S5). 
In particular, we adopt the Hyena~\citep{poli2023.HyenaHierarchyLarger} architecture while the implicit convolution is replaced by a simple real-weighted state-space model~\citep{smith2023.SimplifiedStateSpace}. 

In \cref{fig:grad_norm_stable_parameterization_language_model} (a), given the same initialization, we show that stable reparameterizations such as exponential, softplus, tanh and ``best'' exhibit a narrower range of gradient-over-weight ratios compared to both the direct and relu reparameterizations.
Beyond the gradient at the same initialization, in \cref{fig:loss_and_grad_norm_stable_parameterization_in_LF} (b), we show the gradient-over-weight ratios during the training process. 
The stable reparameterization will give better gradient-over-weight ratios in the sense that the ``best'' stable reparameterization maintains the smallest $\max (\frac{|\textrm{grad}|}{|\textrm{weight}|})$. 
Specifically, as illustrated in \cref{fig:grad_norm_stable_parameterization_language_model} (b) and \cref{fig:1LR2LR5LR}, while training with a large learning rate may render the exponential parameterization unstable, the ``best'' reparameterization $f(w)=1-\frac{1}{w^2+0.5}$ appears to enhance training stability.

\begin{table*}[ht!]
    \centering
    \begin{tabular}{c|cccccc|c}
        & Listops	& Text	& Retrieval	& Image	& Pathfinder	& Pathx	& Avg  \\
    \hline
    Exp parameterization (S4) & 59.60	& 86.82	& 90.90	& \textbf{88.65}	& 94.2	& \textbf{96.35}	& 86.09 \\
    Best parameterization     & \textbf{60.80}	& \textbf{88.5}	& \textbf{91.3}	& 87.39	& \textbf{94.8}	& 96.1	& \textbf{86.48}
    \end{tabular}
    \caption{Comparison of parameterizations on long range arena.}
    \label{tab:lra}
\end{table*}

\subsection{Image classification}

Apart from the gradient scale range shown in the language modeling experiments, we further compare the stability of different parameterization schemes over different initial learning rates. 
As shown in the following \cref{tab:mnist_train_loss} and \cref{tab:cifar10_train_loss}, we found that the ``best'' parameterization can be trained with a larger learning rates while exp/softplus parameterizations cannot be trained with larger learning rates (lr=5.0). 
Although the models exhibit comparable performance at lower learning rates, the ``best'' parameterization consistently outperforms others across a range of learning rates
As the training stability issue has been widely reported for larger models
\footnote{
    \url{https://github.com/state-spaces/mamba/issues/6}
    }
\footnote{
    \url{https://github.com/state-spaces/mamba/issues/22}
    }
, we believe the improved training stability is an important component in the scale-up large language models. 

\subsection{Long Range Arena}

We further verify the effectiveness of stable parameterization over the long range arena, as shown in \cref{tab:lra}. Both the exponential and best parameterizations demonstrate stability, yet the best parameterization delivers slightly superior average performance across the long range arena (LRA)~\citep{tay2021.LongRangeArena} benchmark.

\section{Related works}
\label{related_works}

\paragraph*{RNN}
RNNs, as introduced by \citet{rumelhart1986.LearningRepresentationsBackpropagating}, represent one of the earliest neural network architectures for modeling sequential relationships. 
Empirical findings by \citet{bengio1994.LearningLongtermDependencies} have shed light on the challenge of exponential decaying memory in RNNs. 
Various works~\citep{hochreiter1997.LongShorttermMemory, rusch2022.CoupledOscillatoryRecurrent,wang2023improve} have been done to improve the memory patterns of recurrent models. 
Theoretical approaches~\citep{li2020.CurseMemoryRecurrent, li2022.ApproximationOptimizationTheory, wang2023.InverseApproximationTheory} have been taken to study the exponential memory decay of RNNs.
In this paper, we study the state-space models which are also recurrent. 
Our findings theoretically justify that although SSMs variants exhibit good numerical performance in long-sequence modeling~\citep{gu2022.EfficientlyModelingLonga}, simple SSMs also suffer from the ``curse of memory''.

\paragraph*{SSM}
State-space models~\citep{siivola2003.StatespaceMethodLanguage}, previously discussed in control theory, has been widely used to study the dynamics of complex systems. 
The subsequent variants, S4\citep{gu2022.EfficientlyModelingLonga}, S5~\citep{smith2023.SimplifiedStateSpace}, RetNet~\citep{sun2023retentive} and Mamba~\citep{gu2023.MambaLinearTimeSequence}, have significantly enhanced empirical performance.
Notably, they excel in the long-range arena~\citep{tay2021.LongRangeArena}, an area where transformers traditionally underperform.
Contrary to the initial presumption, our investigations disclose that the ability to learn long-term memory is not derived from the linear RNN coupled with nonlinear layer-wise activations. 
Rather, our study underscores the benefits of stable reparameterization in both approximation and optimization.

\paragraph*{Fading memory}
This paper studies the targets with decaying memory. A slightly different memory concept (fading memory) has been studied in literature~\citep{boyd1984.AnalyticalFoundationsVolterra,boyd1985.FadingMemoryProblem}. 
A critical difference is: fading memory is defined with respect to a particular weight function while decaying memory is defined without a specific weight function. 
While both concepts are similar in characterizing the speed of target memory decay, they are still distinct. For instance, there are examples with decaying memory but not fading memory (the peak-hold operator introduced in~\citet{boyd1985.FadingMemoryProblem}) and vice versa (examples with fading memory but not decaying memory are detailed in Appendix A.7 in~\citet{wang2023.InverseApproximationTheory}).

\section{Conclusion}
\label{conclusion}

In this paper, we study the intricacies of long-term memory learning in state-space models, specifically emphasizing the role of recurrent weights parameterization. 
We prove that state-space models without reparameterization fail to stably approximating targets that exhibit non-exponential decaying memory. 
Our analysis indicates this ``curse of memory'' phenomenon is caused by the eigenvalues of recurrent weight matrices converging to stability boundary. 
As an alternative, we introduce a class of stable reparameterization as a robust solution to this challenge, which also partially explains the performance of S4. 
With stable reparameterization, state-space models can stably approximate any targets with decaying memory. 
We also explore the optimization advantages associated with stable reparameterization, especially concerning gradient-over-weight scale. 
Our results give the theoretical support to observed advantages of reparameterizations in S4 and moreover give principled methods to design \textbf{``best'' reparameterization scheme in the optimization stability sense.}
This paper shows that stable reparameterization not only enables the learning of targets with long-term memory but also enhances the optimization stability.

\newpage

\section*{Acknowledgements}

This research is supported by the National Research Foundation, Singapore, under the NRF fellowship (project No. NRF-NRFF13-2021-0005).
Shida Wang is supported by NUS-RMI Scholarship. 

\section*{Impact Statement}

This paper study the approximation and optimization properties of parameterization in state-space models. 
This paper presents work whose goal is to advance the field of Machine Learning. 
There are minor potential societal consequences of our work, none which we feel must be specifically highlighted here.

\bibliography{example_paper}
\bibliographystyle{icml2024}

\newpage
\appendix
\onecolumn

\section{Graphical demonstration of state-space models as stack of \cref{eq:ssm}}
\label{sec:graphical_demonstration_eq2}
Here we show that \cref{eq:ssm} corresponds to the practical instantiation of SSM-based models in the following sense: 
As shown in \cref{fig:NLSSM2}, any practical instantiation of SSM-based models can be implemented as a stack of \cref{eq:ssm}. 
The pointwise shallow MLP can be realized with two-layer state-space models with layer-wise nonlinearity by setting recurrent weights $W$ to be 0. 
\begin{figure}[ht!]
    \centering
    \includegraphics[width=0.95\linewidth]{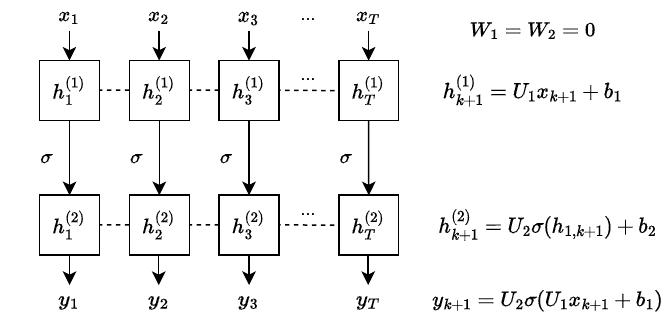}
    \caption{MLP can be realized by two-layer state-space models. The superscript indicates the layers while the subscript indicates the time index. It can be seen MLP is equivalent to SSMs having zero recurrent weights $W_1=W_2=0$.}
    \label{fig:NLSSM2}
\end{figure}

\section{Theoretical backgrounds}
\label{sec:theoretical_backgrounds}
In this section, we collect the definitions for the theoretical statements.

\subsection{Properties of targets}
\label{subsec:properties_of_targets}
We first introduce the definitions on (sequences of) functionals as discussed in~\citep{wang2023.InverseApproximationTheory}.
\begin{definition}{}
    \label{def:functional_properties}
    Let $\mathbf{H} = \{H_t: \mathcal{X} \mapsto \mathbb{R}; t \in \mathbb{R}\}$ be a sequence of functionals.
    \begin{enumerate}
        \item (\textbf{Linear}) $H_t$ is linear functional if for any $\lambda, \lambda' \in \mathbb{R}$ and $\mathbf{x}, \mathbf{x}'\in\mathcal{X}$, $H_t(\lambda \mathbf{x} + \lambda' \mathbf{x}') = \lambda H_t(\mathbf{x}) + \lambda' H_t(\mathbf{x}')$.

        \item (\textbf{Continuous}) $H_t$ is continuous functional if for any $\mathbf{x},' \mathbf{x}\in \mathcal{X}$, $\lim_{{\mathbf{x}' \to \mathbf{x}}} |H_t(\mathbf{x}') - H_t(\mathbf{x})| = 0$.

        \item (\textbf{Bounded}) $H_t$ is bounded functional if the norm of functional $\|H_t\|_{\infty}:=\sup_{\{\mathbf{x} \neq 0\}} \frac{|H_t(\mathbf{x})|}{\|\mathbf{x}\|_{\infty} + 1} + |H_t(\mathbf{0})| < \infty$.

        \item (\textbf{Time-homogeneous}) $\mathbf{H} = \{H_t: t \in \mathbb{R}\}$ is time-homogeneous (or time-shift-equivariant) if the input-output relationship commutes with time shift:
        let $[S_\tau(\mathbf{x})]_t= x_{t-\tau}$ be a shift operator,
        then $\mathbf{H}(S_\tau \mathbf{x}) = S_\tau \mathbf{H}(\mathbf{x})$.
        
        \item (\textbf{Causal}) $H_t$ is causal functional if it does not depend on future values of the input. That is, if $\mathbf{x}, \mathbf{x}'$ satisfy $x_t = x_t'$ for any $t \leq t_0$, then $H_t(\mathbf{x}) = H_t(\mathbf{x}')$ for any $t \leq t_0$.

        \item (\textbf{Regular}) $H_t$ is regular functional if for any sequence $\{\mathbf{x}^{(n)}: n \in \mathbb{N}\}$ such that $x^{(n)}_s \to 0$ for almost every $s \in \mathbb{R}$, then $\lim_{n \to \infty} H_t(\mathbf{x}^{(n)}) = 0.$
    \end{enumerate}
\end{definition}

\subsection{Approximation in Sobolev norm}
\label{subsec:sobolev_norm}
\begin{definition}
    In sequence modeling as a nonlinear functional approximation problem, we consider the Sobolev norm of the functional sequence defined as follow:
\begin{equation}
    \left \|\mathbf{H} - \widehat{\mathbf{H}} \right \|_{W^{1, \infty}} = \sup_t \left (\|H_t - \widehat{H}_t\|_\infty + \left \|\frac{dH_t}{dt} - \frac{d\widehat{H}_t}{dt} \right \|_\infty \right ).
\end{equation}
\end{definition}
Here $\mathbf{H} = \{H_t: t \in \mathbb{R}\}$ is the target functional sequence to be approximated while the $\widehat{\mathbf{H}} = \{\widehat{H}_t: t \in \mathbb{R}\}$ is the model we use. 

In particular, the nonlinear functional operator norm is given by:
\begin{equation}
\label{eq:Operator_norm}
    \|H_t\|_{\infty} := \sup_{\mathbf{x} \neq \mathbf{0}} \frac{|H_t(\mathbf{x})|}{\|\mathbf{x}\|_{\infty} + 1} + |H(\mathbf{0})|.
\end{equation}
As $\mathbf{H}(\mathbf{0}) = 0$, $\|H_t\|_{\infty}$ is reduced to $\displaystyle \sup_{\mathbf{x} \neq 0} \frac{|H_t(\mathbf{x})|}{\|\mathbf{x}\|_{\infty} + 1}$.
If $\mathbf{H}$ is a linear functional, this definition is compatible with the common linear functional norm in \cref{eq:linear_functional_norm}. 

We check this operator norm in \cref{eq:Operator_norm} is indeed a norm:
Without loss of generality, we will drop the time index for brevity. 
\begin{enumerate}
    \item Triangular inequality: For nonlinear functional $H_1$ and $H_2$,
    \begin{align}
        \|H_1 + H_2\|_{\infty} 
        & := \sup_{\mathbf{x} \neq \mathbf{0}} \frac{|(H_1+H_2)(\mathbf{x})|}{\|\mathbf{x}\|_{\infty} + 1} \\
        & \leq \sup_{\mathbf{x} \neq \mathbf{0}} \frac{|H_1(\mathbf{x})|}{\|\mathbf{x}\|_{\infty} + 1} + \sup_{\mathbf{x} \neq \mathbf{0}} \frac{|H_2(\mathbf{x})|}{\|\mathbf{x}\|_{\infty} + 1} = \|H_1\|_{\infty} + \|H_2\|_{\infty}.
    \end{align}
    The inequality is by the property of supremum. 
    \item Absolute homogeneity: 
    For any real constant $s$ and nonlinear functional $H$
    \begin{align}
        \|s H\|_{\infty} 
        := \sup_{\mathbf{x} \neq \mathbf{0}} \frac{|(s H)(\mathbf{x})|}{\|\mathbf{x}\|_{\infty} + 1} 
        = |s| \sup_{\mathbf{x} \neq \mathbf{0}} \frac{|H(\mathbf{x})|}{\|\mathbf{x}\|_{\infty} + 1}
        = |s| \|H\|_{\infty} .
    \end{align}
    \item Positive definiteness: 
    If $\|H\|_{\infty}  = 0$, then for all non-zero inputs $\mathbf{x} \neq \mathbf{0}$ we have $H(\mathbf{x}) = 0$. As $H(\mathbf{0}) = 0$, then we know $H$ is a zero functional. 
\end{enumerate}

\paragraph{Property of nonlinear functional sequence norm}
The definition of functional product is by the element-wise product: $(\mathbf{H}_1 \mathbf{H}_2) (\mathbf{x}) = \mathbf{H}_1(\mathbf{x}) \odot \mathbf{H}_2 (\mathbf{x})$. 
As the functional norm satisfies:
\begin{align}
    \|H_1 H_2\|_{\infty} 
    & := \sup_{\mathbf{x} \neq 0} \frac{|H_1(\mathbf{x}) H_2(\mathbf{x})|}{\|\mathbf{x}\|_{\infty} + 1} + |H_1(\mathbf{0}) H_2(\mathbf{0})| \\
    & \leq \sup_{\mathbf{x} \neq \mathbf{0}}\frac{|H_1(\mathbf{x})|}{\|\mathbf{x}\|_{\infty} + 1} \frac{|H_2(\mathbf{x})|}{\|\mathbf{x}\|_{\infty} + 1} + |H_1(\mathbf{0})| \cdot |H_2(\mathbf{0})| \\
    & \leq \sup_{\mathbf{x} \neq \mathbf{0}}\left ( \frac{|H_1(\mathbf{x})|}{\|\mathbf{x}\|_{\infty} + 1} + |H_1(\mathbf{0})| \right ) \sup_{\mathbf{x} \neq \mathbf{0}}\left ( \frac{|H_2(\mathbf{x})|}{\|\mathbf{x}\|_{\infty} + 1} + |H_2(\mathbf{0})| \right ) \\
    & = \|H_1\|_{\infty} \|H_2\|_{\infty} 
\end{align}
Therefore we have 
\begin{align}
\label{eq:property_of_nonlinear_functional_sequence_norm}
    \|\mathbf{H}_1 \mathbf{H}_2\|_{\infty} 
    & = \sup_t \left (\|H_1 H_2\|_\infty + \left \|\frac{d(H_1H_2)}{dt} \right \|_\infty \right )  \\
    & = \sup_t \left (\|H_1 H_2\|_\infty + \left \| H_1 \frac{dH_2}{dt} \right \|_\infty + \left \|\frac{dH_1}{dt} H_2 \right \|_\infty \right )  \\
    & \leq \sup_t \left ( \|H_1\|_{\infty} \|H_2\|_{\infty} + \|H_1\|_{\infty} \left \| \frac{dH_2}{dt} \right \|_{\infty} + \left \|\frac{dH_1}{dt} \right \|_{\infty} \|H_2\|_{\infty} \right ) \\
    & \leq \sup_t \left (\|H_1\|_{\infty} + \left \|\frac{dH_1}{dt} \right \|_{\infty}  \right ) \sup_t \left ( \|H_2\|_{\infty} + \left \|\frac{dH_2}{dt} \right \|_{\infty}) \right )\\
    & = \|\mathbf{H}_1\|_{\infty}  \|\mathbf{H}_2\|_{\infty} 
\end{align}

\subsection{Riesz representation theorem for linear functional}
\label{subsec:Riesz_representation_theorem}

\begin{theorem}[Riesz-Markov-Kakutani representation theorem]
\label{thm:riesz_representation_theorem}
    Assume $H : C_0(\mathbb{R}, \mathbb{R}^d) \mapsto \mathbb{R}$ is a linear and continuous functional. 
    Then there exists a unique, vector-valued, regular, countably additive signed measure $\mu$ on $\mathbb{R}$ such that
    \begin{align}
        H(\mathbf{x}) = \int_{\mathbb{R}} x_s^\top d\mu(s)
        = \sum_{i=1}^{d} \int_{\mathbb{R}} x_{s,i} d\mu_i(s).
    \end{align}
    In addition, we have the linear functional norm 
    \begin{equation}
    \label{eq:linear_functional_norm}
        \| H \|_{\infty} := \sup_{\| \mathbf{x} \|_\mathcal{X} \leq 1} | H(\mathbf{x}) | = \|\mu\|_1(\mathbb{R}) := \sum_i |\mu_i|(\mathbb{R}).
    \end{equation}
    In particular, this linear functional norm is compatible with the norm considered for nonlinear functionals in \cref{eq:Operator_norm}. 
\end{theorem}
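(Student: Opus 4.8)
The plan is to reduce the vector-valued statement to the classical scalar Riesz--Markov--Kakutani theorem on the locally compact Hausdorff space $\mathbb{R}$, and then reassemble the coordinate measures while carefully tracking how the $\ell^\infty$ geometry on $\mathbb{R}^d$ dualizes to an $\ell^1$ aggregation of total variations. First I would fix a coordinate $i \in \{1, \dots, d\}$ and define the scalar functional $H_i : C_0(\mathbb{R}, \mathbb{R}) \to \mathbb{R}$ by $H_i(\phi) := H(\phi\, e_i)$, where $\phi\, e_i$ denotes the element of $\mathcal{X}$ carrying $\phi$ in the $i$-th slot and zero elsewhere. Since $\|\phi\, e_i\|_\infty = \|\phi\|_\infty$, each $H_i$ inherits linearity and continuity from $H$, so the scalar theorem furnishes a unique regular, countably additive signed Borel measure $\mu_i$ with $H_i(\phi) = \int_\mathbb{R} \phi\, d\mu_i$ and $\|H_i\| = |\mu_i|(\mathbb{R})$.

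Next I would assemble $\mu := (\mu_1, \dots, \mu_d)$ and verify the representation. Any $\mathbf{x} \in \mathcal{X}$ decomposes as $\mathbf{x} = \sum_{i=1}^d x_{\cdot, i}\, e_i$ with each $x_{\cdot, i} \in C_0(\mathbb{R}, \mathbb{R})$; linearity of $H$ then gives $H(\mathbf{x}) = \sum_i H_i(x_{\cdot, i}) = \sum_i \int_\mathbb{R} x_{s,i}\, d\mu_i(s) = \int_\mathbb{R} x_s^\top d\mu(s)$, which is the claimed formula. Uniqueness propagates coordinatewise: if some $\mu'$ also represented $H$, then testing against $\phi\, e_i$ forces $\int \phi\, d\mu_i' = \int \phi\, d\mu_i$ for all $\phi \in C_0(\mathbb{R})$, and scalar uniqueness yields $\mu_i' = \mu_i$.

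The substantive step is the norm identity $\|H\|_\infty = \sum_i |\mu_i|(\mathbb{R})$, and this is where I expect the main obstacle. The upper bound is routine: $|H(\mathbf{x})| \leq \sum_i \int |x_{s,i}|\, d|\mu_i| \leq \sum_i \|x_{\cdot,i}\|_\infty\, |\mu_i|(\mathbb{R}) \leq \|\mathbf{x}\|_\infty \sum_i |\mu_i|(\mathbb{R})$, using $\|x_{\cdot,i}\|_\infty \leq \|\mathbf{x}\|_\infty$. The matching lower bound is the delicate part, and it is exactly here that the $\ell^\infty$ norm on $\mathbb{R}^d$ is essential: invoking the Hahn decomposition of each $\mu_i$, I would choose, for every $\varepsilon > 0$, a function $\phi_i \in C_0(\mathbb{R})$ with $\|\phi_i\|_\infty \leq 1$ and $\int \phi_i\, d\mu_i \geq |\mu_i|(\mathbb{R}) - \varepsilon/d$, where regularity lets one approximate the signed indicator of the positive minus negative set by a continuous function of sup-norm at most one. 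Setting $\mathbf{x}_t := (\phi_1(t), \dots, \phi_d(t))$ gives $\|\mathbf{x}\|_\infty = \sup_t \max_i |\phi_i(t)| \leq 1$, precisely because the max-norm lets each coordinate be optimized independently, whence $H(\mathbf{x}) = \sum_i \int \phi_i\, d\mu_i \geq \sum_i |\mu_i|(\mathbb{R}) - \varepsilon$. Letting $\varepsilon \to 0$ closes the gap and establishes the $\ell^1$ aggregation in \cref{eq:linear_functional_norm}.

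Finally I would reconcile this with the nonlinear operator norm of \cref{eq:Operator_norm}. Since $H$ is linear we have $H(\mathbf{0}) = 0$, so that norm reduces to $\sup_{\mathbf{x}\neq 0} |H(\mathbf{x})|/(\|\mathbf{x}\|_\infty + 1)$; replacing $\mathbf{x}$ by $\lambda \mathbf{x}$ and sending $\lambda \to \infty$ shows this supremum coincides with $\sup_{\mathbf{x}\neq 0}|H(\mathbf{x})|/\|\mathbf{x}\|_\infty = \sup_{\|\mathbf{x}\|\leq 1}|H(\mathbf{x})|$, so the $+1$ is asymptotically immaterial for linear functionals and the two norms agree.
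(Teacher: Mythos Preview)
Your argument is correct and follows the standard route: coordinatewise reduction to the scalar Riesz--Markov--Kakutani theorem, reassembly via linearity, and the $\ell^\infty/\ell^1$ duality for the norm identity. The paper, however, does not supply its own proof of this statement; it records the theorem in the appendix on theoretical background as a classical result and only invokes it to justify that the memory-function framework is compatible with the linear-functional setting. So there is no paper proof to compare against, and your write-up is an appropriate self-contained justification.

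One small remark on the final paragraph: your scaling argument shows that $\sup_{\mathbf{x}\neq 0}|H(\mathbf{x})|/(\|\mathbf{x}\|_\infty+1)$ equals the usual operator norm $\sup_{\|\mathbf{x}\|\leq 1}|H(\mathbf{x})|$, which is exactly the sense of ``compatible'' the paper intends (cf.\ the sentence following \cref{eq:Operator_norm}). This is fine as stated.
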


\section{Proofs for theorems and lemmas}
\label{sec:proofs}

In \cref{subsec:point-wise_continuity}, we show that the nonlinear functionals defined by state-space models are point-wise continuous functionals at Heaviside inputs. 
In \cref{subsec:proof_for_com}, the proof for state-space models' exponential memory decaying memory property is given.
In \cref{subsec:proof_for_stable_reparameterization_linear_functional}, we prove the linear RNN with stable reparameterization can stably approximate any linear functional. The target is no longer limited to have an exponenitally decaying memory. 
The gradient norm estimate of the recurrent layer is included in \cref{subsec:stable_parameterization_opt}.

\subsection{Proof for SSMs are point-wise continuous functionals}
\label{subsec:point-wise_continuity}
    
\begin{proof}

Let $\mathbf{x}$ be any fixed Heaviside input. 
Assume $\displaystyle \lim_{k \to \infty} \|\mathbf{x}_k - \mathbf{x}\|_{\infty} = 0$. 
Let $h_{k, t}$ and $h_t$ be the hidden state for inputs $\mathbf{x}_k$ and $\mathbf{x}$. 
Without loss of generality, assume $t>0$. 
The following $|\cdot|$ refers to $p=\infty$ norm. 

By definition of the hidden states dynamics and triangular inequality, since $\sigma(\cdot)$ is Lipschitz continuous
\begin{align}
    \frac{d |h_{k,t} - h_t|}{dt} 
    & = |\sigma(\Lambda h_{k,t} + Ux_{k,t}) - \sigma(\Lambda h_t + Ux_t)| \\
    & \leq L |\Lambda h_{k,t} + Ux_{k,t} - \Lambda h_t - Ux_t| \\
    & = L |\Lambda (h_{k,t} - h_t) + U(x_{k,t} - x_t)| \\
    & \leq L (|\Lambda| |h_{k,t} - h_t| + |U| |x_{k,t} - x_t|).
\end{align}
Here $L$ is the Lipschitz constant of activation $\sigma$. 
Apply the Grönwall inequality to the above inequality, we have: 
\begin{equation}
    |h_{k,t} - h_t| \leq \int_0^t e^{L|\Lambda| (t-s)} L |U| \ |x_{k,s}-x_s| ds.
\end{equation} 
As the inputs are bounded, by dominated convergence theorem we have right hand side converges to 0 therefore 
\begin{equation}
    \lim_{k \to \infty} |h_{k,t} - h_t| = 0, \quad \forall t.
\end{equation}

Let $y_{k, t}$ and $y_t$ be the outputs for inputs $\mathbf{x}_k$ and $\mathbf{x}$. 
Therefore we show the point-wise convergence of $\frac{dH_t}{dt}$ at $\mathbf{x}$: 
\begin{align}
    \lim_{k \to \infty} \left | \frac{dy_{k,t}}{dt} - \frac{dy_t}{dt} \right |
    & = \lim_{k \to \infty} \left | c^\top(\frac{dh_{k,t}}{dt} - \frac{dh_t}{dt}) \right | \\ 
    & \leq \lim_{k \to \infty} |c| L( |\Lambda| |h_{k,t} - h_t| + |U| |x_{k,t} - x_t|)= 0.
\end{align}
\end{proof}

\subsection{Point-wise continuity leads to decaying memory}
\label{subsec:point-wise-continuity-to-decaying-memory}

Here we give the proof of decaying memory based on the point-wise continuity of $\frac{dH_t}{dt}$ and boundedness and time-homogeneity of $\mathbf{H}$: 
\begin{proof}
    $$\lim_{t \to \infty} \left | \frac{dH_t}{dt} (\mathbf{u}^x) \right | = \lim_{t \to \infty} \left |\frac{dH_0}{dt} (x \cdot \1_{\{s \geq -t\}}) \right | = \left |\frac{dH_0}{dt} (\mathbf{x}) \right | = 0.$$
    The first equation comes from time-homogeneity. 
    The second equation is derived from the point-wise continuity where input $\mathbf{x}$ means constant $x$ for all time $\mathbf{x} = x \cdot \1_{\{s \geq -\infty\}}$. 
    The third equation is based on the boundedness and time-homogeneity as the output over constant input should be finite and constant $H_t(\mathbf{x}) = H_s(\mathbf{x})$ for all $s, t$. 
    Therefore $|\frac{dH_0}{dt} (\mathbf{x})|=0$.
\end{proof}

\subsection{Proof for \texorpdfstring{\cref{thm:com_ssm}}{}}
\label{subsec:proof_for_com}

The main idea of the proof is two-fold. 
First of all, we show that state-space models with strictly monotone activation is decaying memory in \cref{lemma:decaying_v}. 
Next, the idea of analysing the memory functions through a transform from $[0, \infty)$ to $(0, 1]$ is similar to previous works \citep{li2020.CurseMemoryRecurrent,li2022.ApproximationOptimizationTheory,wang2023.InverseApproximationTheory}. 
The remainder of the proof follows a standard approach, as the derivatives of the hidden states follow the rules of linear dynamical systems when Heaviside inputs are considered.

\begin{proof}
    Assume the inputs considered are uniformly bounded by $X_0$:
    \begin{equation}
        \|\mathbf{x}\|_{\infty} < X_0. 
    \end{equation}

    Define the derivative of hidden states for unperturbed model to be $v_{m, t} = \frac{dh_{m, t}}{dt}$.
    Similarly, $\tilde{v}_{m, t}$ is the derivative of hidden states for perturbed models $\tilde{v}_{m,t} = \frac{d\tilde{h}_{m,t}}{dt}$.
    
    Since each perturbed model has a decaying memory and the target functional sequence $\mathbf{H}$ has a stable approximation, by \cref{lemma:decaying_v}, we have
    \begin{equation}
        \lim_{t \to \infty} \tilde{v}_{m, t} = 0, \quad \forall m.
    \end{equation}

    If the inputs are limited to Heaviside inputs, the derivative $\tilde{v}_{m, t}$ satisfies the following dynamics: Notice that the hidden state satisfies $h_t=0, t \in (-\infty, 0]$, 
    \begin{align}
        \frac{d\tilde{v}_{m, t}}{dt} 
        & = \widetilde{\Lambda}_m \tilde{v}_{m, t}, \quad t \geq 0       \\
        \tilde{v}_{m, 0} 
        & =  \widetilde{\Lambda}_m h_{0} + \widetilde{U}_m x_{0} + \tilde{b}_m = \widetilde{U}_m x_{0} + \tilde{b}_m \\
        \Rightarrow \tilde{v}_{m, t} 
        & =  e^{\widetilde{\Lambda}_m t} (\widetilde{U}_m x_{0} + \tilde{b}_m).
    \end{align}
    Notice that the perturbed initial conditions of the $\tilde{v}_{m,t}$ are uniformly (in $m$) bounded: 
    \begin{align}
    \label{eq:bound_of_perturbed_initial_conditions}
        \tilde{V}_0 
        & := \displaystyle \sup_m |\tilde{v}_{m, 0}|_2 \\
        & = \sup_m |\widetilde{U}_m x_{0} + \tilde{b}_m|_2 \\
        & \leq \sup_m |\widetilde{U}_m x_{0} + \tilde{b}_m|_2 \\
        & \leq d X_0 (\sup_m \|U_m\|_2+\beta_0) + \sup_m \|b_m\|_2 + \beta_0 \\ 
        & < \infty
    \end{align}
    Here $d$ is the input sequence dimension. 

    Similarly, the unperturbed initial conditions satisfy:
    \begin{align}
    \label{eq:bound_of_initial_conditions}
        V_0 
        & := \displaystyle \sup_m |\tilde{v}_{m, 0}|_2 \\
        & = \sup_m |U_m x_{0} + b_m|_2 \\
        & \leq \sup_m |U_m x_{0} + b_m|_2 \\
        & \leq d X_0 \sup_m \|U_m\|_2 + \sup_m \|b_m\|_2  \\ 
        & \leq (d X_0 + 1)\theta_{max} \\ 
        & < \infty
    \end{align}
    
    Select a sequence of perturbed recurrent matrices $\{\widetilde{\Lambda}_{m, k}\}_{k=1}^\infty$ satisfying the following two properties:
    \begin{enumerate}
        \item $\widetilde{\Lambda}_{m, k}$ is Hyperbolic, which means the real part of the eigenvalues of the matrix are nonzero.
        \item $\lim_{k \to \infty} (\widetilde{\Lambda}_{m, k} - \Lambda_m) = \beta_0 I_m$.
    \end{enumerate}
    Moreover, by \cref{lemma:Hartman_Grobman}, we know that each hyperbolic matrix $\widetilde{\Lambda}_{m, k}$ is Hurwitz as the system for $\tilde{v}_{m, t}$ is asymptotically stable.
    \begin{equation}
        \sup_m \max_{ i \in [m]}(\lambda_i(\widetilde{\Lambda}_{m, k})) < 0.
    \end{equation}
    This is the stability boundary for the state-space models under perturbations. 
    
    Therefore the original diagonal unperturbed recurrent weight matrix $\Lambda_m$ satisfies the following eigenvalue inequality \textbf{uniformly} in $m$. Since $\Lambda_m$ is diagonal:
    \begin{equation}
    \label{eq:bounded_away_from_0}
        \sup_m \max_{ i \in [m]}(\lambda_i(\Lambda_m)) \leq -\beta_0.
    \end{equation}

    Therefore the model memory decays exponentially uniformly
    \begin{align}
        \mathcal{M}(\widehat{\mathbf{H}}_m)(t) 
        & := \sup_{X_0} \frac{1}{X_0 + 1} \left |\frac{d}{dt} \hat{y}_{m, t} \right | \\
        & = \sup_{X_0} \frac{1}{X_0 + 1} |c_m^\top [\sigma'(h_{m, t}) \circ v_{m, t} ]| \\
        & \leq \sup_{X_0} \frac{1}{X_0 + 1} |c_m|_{2} |\sigma'(h_{m, t}) \circ v_{m, t}|_{2} \\
        & \leq \sup_{X_0} \frac{1}{X_0 + 1} |c_m|_{2} \cdot \sup_{z} |\sigma'(z)| \cdot |e^{-\beta_0 t} v_{m, 0} |_{2} \\
        & \leq \sup_{X_0} \frac{1}{X_0 + 1} \bigg (\sup_m |c_m|_{2} \cdot \sup_{z} |\sigma'(z)| \cdot V_0 \bigg ) e^{-\beta_0 t} \\
        & \leq \sup_{X_0} \frac{1}{X_0 + 1} \bigg (\sup_m |c_m|_{2} \cdot L_0 \cdot V_0 \bigg ) e^{-\beta_0 t} \\
        & \leq \sup_{X_0} \bigg (\sup_m |c_m|_{2} \cdot L_0 \\
        & \cdot \Big ( \frac{X_0}{X_0 + 1} d (\sup_m \|U_m\|_2) + \frac{1}{X_0 + 1} (\sup_m \|b_m\|_2) \Big ) \bigg ) e^{-\beta_0 t} \\
        & \leq \bigg (\sup_m |c_m|_{2} \cdot L_0 \Big (d \sup_m \|U_m\|_2 + \sup_m \|b_m\|_2 \Big ) \bigg ) e^{-\beta_0 t} \\
        & \leq (d+1) L_0 \theta_{\max}^2 e^{-\beta_0 t}
    \end{align}
    The inequalities are based on vector norm properties, Lipschitz continuity of $\sigma(z)$ and uniform boundedness of unperturbed initial conditions. 
    Therefore we know the model memories are uniformly decaying. 
    
    By \cref{lemma:model_decay_target_decay}, the target $\mathbf{H}$ has an exponentially decaying memory as it is approximated by a sequence of models $\{\widehat{\mathbf{H}}_m\}_{m=1}^\infty$ with uniformly exponentially decaying memory. 
\end{proof}

\begin{remark}
\label{remark:stability_boundary}
    When the approximation is unstable, we cannot have the real parts of the eigenvalues for recurrent weights bounded away from 0 in \cref{eq:bounded_away_from_0}. 
    As the stability of linear RNNs requires the real parts (of the eigenvalues) to be negative, then the maximum of the real parts will converge to 0. 
    This is the stability boundary of state-space models. 
    \begin{equation}
    \label{eq:unstable_consequence}
        \lim_{m \to \infty} \max_{ i \in [m]}(\lambda_i(\Lambda_m)) = 0^-.
    \end{equation}
\end{remark}

\begin{remark}
    The uniform weights bound is necessary in the sense that: 
        Since state-space models are universal approximators, they can approximate targets with long-term memories. 
    However, if the target has an non-exponential decaying (e.g. polynomial decaying) memory, the weights bound of the approximation sequence will be exponential in the sequence length $T$. 
    \begin{equation}
        \theta_{max}^2 \geq e^{\beta_0 T} \frac{\mathcal{M}(\mathbf{H})(T)}{(d+1) L_0}. 
    \end{equation}
    This result indicates that scaling up SSMs without reparameterization is inefficient in learning sequence relationships with a large $T$ and long-term memory. 
\end{remark}

\modification{
\begin{remark}[\textbf{On the generalization to multi-layer cases}]
\label{remark:generalization_to_multilayer}
    We will use the following two-layer state-space models to demonstrate the idea to generalize this result to multi-layer cases.
    \begin{align}
        \frac{dh_t}{dt} 
        & = \Lambda_1 h_t + U_{1} x_t \\
        y_t 
        & = \sigma(h_t) \\
        \frac{d s_t}{dt} 
        & = \Lambda_2 s_t + U_{2} y_t \\
        \hat{z}_t 
        & = c^\top \sigma(s_t)
    \end{align}
    We can have the following memory function bounds: For simplicity, we drop the term $m$ in $\Lambda_1, \Lambda_2, U_1, U_2$. 
    \begin{align}
        \mathcal{M}(\widehat{\mathbf{H}}_m)(t) 
        & := \sup_{X_0} \frac{1}{X_0 + 1} \left |\frac{d}{dt} \hat{z}_{m, t} \right |_2 \\
        & = \sup_{X_0} \frac{1}{X_0 + 1} \left | c^\top \left ( \sigma'(s_{m, t}) \circ \frac{d s_{m, t}}{dt} \right ) \right |_2 \\
        & = \sup_{X_0} \frac{1}{X_0 + 1} \left | c^\top \left ( \sigma'(s_{m, t}) \circ \int_{0}^t e^{\Lambda_2 t_1} U_2 \frac{dy_{m, t-t_1}}{dt} dt_1 \right ) \right |_2 \\
        & = \sup_{X_0} \frac{1}{X_0 + 1} \left | c^\top \left ( \sigma'(s_{m, t})\circ \int_{0}^t e^{\Lambda_2 t_1} U_2 (\sigma'(h_{m, t-t_1}) \circ v_{m,t-t_1}) dt_1 \right ) \right |_2 \\
        & = \sup_{X_0} \frac{1}{X_0 + 1} \left | c^\top \left ( \sigma'(s_{m, t}) 
        \circ \int_{0}^t e^{\Lambda_2 t_1} U_2 (\sigma'(h_{m, t-t_1}) \circ e^{\Lambda_1 (t-t_1)} v_{m, 0}) dt_1 \right ) \right |_2 \\
        & \leq \sup_{X_0} \frac{1}{X_0 + 1} |c |_2  | \sigma'(s_{m, t}) |_2 
        \int_{0}^t |e^{\Lambda_2 t_1}|_2 |U_2|_2 (|\sigma'(h_{m, t-t_1})|_2 |e^{\Lambda_1 (t-t_1)}|_2 V_0) dt_1 \\
        & \leq L_0^2 \theta_{max}^2 \sup_{X_0} \frac{1}{X_0 + 1} \int_{0}^t |e^{\Lambda_2 t_1}|_2 |e^{\Lambda_1 (t-t_1)}|_2 V_0 dt_1 \\
        & \leq L_0^2 \theta_{max}^3 \sup_{X_0} \frac{(d X_0 + 1)}{X_0 + 1} \int_{0}^t |e^{\Lambda_2 t_1}|_2 |e^{\Lambda_1 (t-t_1)}|_2 dt_1 \\
        & \leq L_0^2 \theta_{max}^3 \sup_{X_0} \frac{(d X_0 + 1)}{X_0 + 1} \int_{0}^t |e^{-\beta_0 t_1}|_2 |e^{-\beta_0 (t-t_1)}|_2 dt_1 \\
        & \leq (d+1) L_0^2 \theta_{max}^3 t e^{-\beta_0 t}.
    \end{align}
\end{remark}
The first inequality comes from the Cauchy inequality ($|a \circ b|_2 \leq |a|_2 \cdot |b|_2$).
The second inequality comes from the property of activation $\sigma(\cdot)$ and uniform bound on weights. 
The third inequality comes from the bound of $V_0$ in \cref{eq:bound_of_initial_conditions}.
The last inequality is the direct evaluation based on the eigenvalues of $\Lambda_1$ and $\Lambda_2$. 
As here is a fast decaying term $e^{-\beta_0 t}$, we simplify other polynomial scale components in $P$. 
}

A further generalization of the memory function for $\ell$-layer SSMs would be: For some polynomial $P(t)$ with degree at most $l-1$
\begin{equation}
    \mathcal{M}(\widehat{\mathbf{H}}_m)(t) \leq (d+1) L_0^{\ell} \theta_{max}^{\ell + 1} P(t) e^{-\beta_0 t}. 
\end{equation}

\subsection{Proof for \texorpdfstring{\cref{thm:stable_reparameterization_linear_functional}}{}}
\label{subsec:proof_for_stable_reparameterization_linear_functional}

\begin{proof}
Let the target linear functional be $H_t(\mathbf{x}) = \int_{-\infty}^t \rho(t-s) x_s ds$. 
Here $\rho$ is an $L_1$ integrable function. 
We consider a simplified model setting with only parameters $c$ and $w$. 
Let $c_i, w_i$ be the unperturbed weights and $\tilde{c}_i, \tilde{w}_i$ be the perturbed recurrent weights.
Similar to $\rho$ being $L_1$ integrable, we note that $\int_0^\infty |c_i e^{f(w_i)t}|dt = \frac{|c_i|}{|f(w_i)|}$.
To have a sequence of well-defined model, we require they are uniformly (in $m$) absolutely integrable:
\begin{equation}
\label{eq:abs_integrable}
    \sup_m \sum_{i=1}^m \frac{|c_i|}{|f(w_i)|} < \infty, \quad \sup_m \sum_{i=1}^m \frac{1}{|f(w_i)|} < \infty. 
\end{equation}

Based $|\tilde{w} - w|_2 \leq \beta$ and $|\tilde{c} - c|_2 \leq \beta$. 
We know the approximation error is 
\begin{align}
    E_m(\beta) 
    & = \sup_{|\tilde{w} - w|_2 \leq \beta, |\tilde{c} - c|_2 \leq \beta} \int_0^\infty \left | \sum_{i=1}^m \tilde{c}_i e^{f(\tilde{w}_i) t} - \rho(t) \right | dt \\
    & \leq \sup_{|\tilde{w} - w|_2 \leq \beta} \int_0^\infty \left | \sum_{i=1}^m c_i e^{f(w_i) t} - \rho(t) \right | dt \\
    & + \sup_{|\tilde{w} - w|_2 \leq \beta} \int_0^\infty \left | \sum_{i=1}^m c_i e^{f(\tilde{w}_i) t}  - \sum_{i=1}^m c_i e^{f(w_i) t} \right | dt \\
    & + \sup_{|\tilde{w} - w|_2 \leq \beta, |\tilde{c} - c|_2 \leq \beta} \int_0^\infty \left | \sum_{i=1}^m (\tilde{c_i} - c_i) e^{f(\tilde{w}_i) t}  \right | dt \\
    & \leq \sup_{|\tilde{w} - w|_2 \leq \beta} \int_0^\infty \left | \sum_{i=1}^m c_i e^{f(w_i) t} - \rho(t) \right | dt \\
    & + \sup_{|\tilde{w} - w|_2 \leq \beta} \int_0^\infty \left | \sum_{i=1}^m c_i e^{f(\tilde{w}_i) t}  - \sum_{i=1}^m c_i e^{f(w_i) t} \right | dt \\
    & + \sup_{|\tilde{w} - w|_2 \leq \beta, |\tilde{c} - c|_2 \leq \beta} \int_0^\infty \sum_{i=1}^m \beta |e^{f(\tilde{w}_i) t} - e^{f(w_i) t} + e^{f(w_i) t}| dt \\
    & \leq E_m(0) + \sup_{|\tilde{w} - w|_2 \leq \beta} \int_0^\infty \sum_{i=1}^m |c_i| \left | e^{f(\tilde{w}_i) t}  - e^{f(w_i) t} \right | dt \\
    & + \sup_{|\tilde{w} - w|_2 \leq \beta, |\tilde{c} - c|_2 \leq \beta} \int_0^\infty \beta \sum_{i=1}^m |e^{f(\tilde{w}_i) t} - e^{f(w_i) t} | dt + \int_0^\infty \beta \left | \sum_{i=1}^m e^{f(w_i) t}  \right | dt \\
    & = E_m(0) + \sup_{|\tilde{w} - w|_2 \leq \beta} \int_0^\infty \sum_{i=1}^m (|c_i| + \beta) \left | e^{f(\tilde{w}_i) t}  - e^{f(w_i) t} \right | dt \\
    & + \int_0^\infty \beta \left | \sum_{i=1}^m e^{f(w_i) t}  \right | dt \\
    & = E_m(0) + \sum_{i=1}^m (|c_i| + \beta) \sup_{|\tilde{w} - w|_2 \leq \beta} \int_0^\infty \left | e^{f(\tilde{w}_i) t}  - e^{f(w_i) t} \right | dt + \int_0^\infty \beta \left | \sum_{i=1}^m e^{f(w_i) t}  \right | dt \\
    & = E_m(0) + \sum_{i=1}^m (|c_i|+\beta) \sup_{|\tilde{w}_i - w_i| \leq \beta} \int_0^\infty \left | e^{f(\tilde{w}_i) t}  - e^{f(w_i) t} \right | dt + \beta \sum_{i=1}^m \frac{1}{|f(w_i)|}\\
    & \leq E_m(0) + \sum_{i=1}^m (|c_i|+\beta) \frac{g(\beta)}{|f(w_i)|} + \beta \sum_{i=1}^m \frac{1}{|f(w_i)|}\\
    & = E_m(0) + \sum_{i=1}^m  \frac{g(\beta) (|c_i| + \beta) + \beta}{|f(w_i)|}.
\end{align}
The first and third inequalities are triangular inequality. 
The second inequality comes from the fact that $|\tilde{w}_i - w_i| \leq |\tilde{w} - w|_2 \leq \beta$. 
The fourth inequality is achieved via the property of stable reparameterization: For some continuous function $g(\beta): [0, \infty) \to [0, \infty), g(0)=0$:
\begin{equation}
    \sup_w \left[ |f(w)| \sup_{|\tilde{w} - w| \leq \beta} \int_0^\infty \left | e^{f(\tilde{w}) t}  - e^{f(w) t} \right | dt \right ] \leq g(\beta).
\end{equation}

By definition of stable approximation, we know $\lim_{m \to \infty} E_m(0) = 0$. 
Also according to the requirement of the stable approximation in \cref{eq:abs_integrable}, we have
\begin{align}
    \lim_{\beta \to 0} E(\beta) 
    & = \lim_{\beta \to 0} \lim_{m \to \infty} E_m(\beta) \\
    & \leq \lim_{\beta \to 0} \lim_{m \to \infty} E_m(0) + \left (\sup_m \sum_{i=1}^m \frac{|c_i| + \beta}{|f(w_i)|} \right ) * \lim_{\beta \to 0} g(\beta) + \lim_{\beta \to 0} \beta * \left ( \sup_m \sum_{i=1}^m \frac{1}{|f(w_i)|} \right )\\
    & = 0 + 0 + 0 = 0 = E(0).
\end{align}

\end{proof}

\begin{remark}
\label{remark:verification_of_stable_reparameterization}
\textbf{Here we verify the reparameterization methods satisfy the definition of stable reparameterization.}

For exponential reparameterization $f(w) = - e^w, w \in \mathbb{R}$: 
\begin{equation}
    \sup_{|\tilde{w} - w| \leq \beta} \int_0^\infty \left | e^{f(\tilde{w}) t}  - e^{f(w) t} \right | dt = \frac{e^\beta -1}{|f(w)|}.
\end{equation}

For softplus reparameterization $f(w) = -\log(1+e^w), w \in \mathbb{R}$: 
Notice that $\exp(-\beta) \log(1+\exp(w)) \leq \sup_{|\tilde{w} - w| \leq \beta} \log(1+\exp(\tilde{w})) \leq \exp(\beta) \log(1+\exp(w)) $, 
\begin{equation}
    \sup_{|\tilde{w} - w| \leq \beta} \int_0^\infty \left | e^{f(\tilde{w}) t}  - e^{f(w) t} \right | dt \leq \frac{e^\beta-1}{|f(w)|}.
\end{equation}

For ``best'' reparameterization $f(w) = -\frac{1}{a w^2 + b}, w \in \mathbb{R}, a, b > 0$: Without loss of generality, let $w \geq 0$
\begin{align}
    \sup_{|\tilde{w} - w| \leq \beta} \int_0^\infty \left | e^{f(\tilde{w}) t}  - e^{f(w) t} \right | dt 
    & =  |a (w+\beta)^2 - a w^2| \\
    & \leq \frac{\frac{a(\beta^2 + 2\beta w)}{a w^2 + b}}{|f(w)|} \\
    & \leq \frac{\frac{a(\beta^2 + 2\beta w)}{b}}{|f(w)|}. 
\end{align}
Here $g(\beta) = \frac{a(\beta^2 + 2\beta w)}{b}$. 
The famous M\"untz--Sz\'asz theorem indicates that selecting any non-zero constant $a$ does not affect the universality of linear RNN. 

While for the case without reparameterization $f(w) = w, w<0$: For $0 \leq \beta < -w$,
\begin{equation}
    \sup_{|\tilde{w} - w| \leq \beta} \int_0^\infty \left | e^{f(\tilde{w}) t}  - e^{f(w) t} \right | dt = \frac{\beta}{(-w-\beta)(-w)} = \frac{\beta}{(-w-\beta)|f(w)|},
\end{equation}
Here $\lim_{w \to -\beta} \sup_{w} \frac{\beta}{-w-\beta} = \infty$, therefore the direct parameterization is not a stable reparameterization. 
\end{remark}

\begin{remark}[On the generalization of existence of stable approximation to nonlinear functionals]
\label{remark:existence_of_stable_approximation_for_nonlinear_functionals}
    The previous results are established for the stable approximation of linear functionals by linear RNNs with stable approximations. 

    Here we show that this can be further extended to nonlinear functionals. 
    According to the Volterra Series representation, the nonlinear functional has expansion by multi-layer composition or element-wise product~\citep{wang2023.StatespaceModelsLayerwisea}. 
    Therefore if the existence of stable approximation is preserved for functional composition and polynomial, then we can generalize the above argument to the nonlinear functionals by working with nonlinear functional representations.

    \begin{theorem}[\citet{boyd1984.AnalyticalFoundationsVolterra,wang2023.StatespaceModelsLayerwisea}]
        For any continuous time-invariant system with $x(t)$ as input and $y(t)$ as output can be expanded in the Volterra series as follow
        \begin{equation}
            y(t) = \rho_0 + \sum_{n=1}^N \int_0^t \cdots \int_0^t \rho_n(\tau_1, \dots, \tau_n) \prod_{j=1}^n x(t-\tau_j) d\tau_j.
        \end{equation}
        In particular, we call the expansion order $N$ to be the series' order. 
    \end{theorem}

    \begin{lemma}[Stable approximation induced by polynomials of stable approximation]
        Assume $\mathbf{H}_1$ and $\mathbf{H}_2$ can be stably approximated, let $f$ be some polynomial, then $f(\mathbf{H}_1, \mathbf{H}_2)$ can also be stably approximated. 
    \end{lemma}
    \begin{proof}
        Let $f(\mathbf{H}_1, \mathbf{H}_2) = \sum_{i, j} c_{i, j} \mathbf{H}_1^i \mathbf{H}_2^j$. 
        The definition of functional product is by the element-wise product: $(\mathbf{H}_1 \mathbf{H}_2) (\mathbf{x}) = \mathbf{H}_1(\mathbf{x}) \odot \mathbf{H}_2 (\mathbf{x})$. 
        \begin{align}
            E_m(\beta) 
            & = \sup_{|\tilde{\theta} - \theta| \leq \beta} \|f(\mathbf{H}_1, \mathbf{H}_2) - f(\mathbf{H}_1(\tilde{\theta}), \mathbf{H}_2(\tilde{\theta})) \|_{W^{1, \infty}}  \\
            & \leq E_m(0) + \sup_{|\tilde{\theta} - \theta| \leq \beta} \|f(\mathbf{H}_1(\theta), \mathbf{H}_2(\theta)) - f(\mathbf{H}_1(\tilde{\theta}), \mathbf{H}_2(\tilde{\theta})) \|_{W^{1, \infty}}  \\
            & \leq E_m(0) + \sup_{|\tilde{\theta} - \theta| \leq \beta} \|f(\mathbf{H}_1(\theta), \mathbf{H}_2(\theta)) - f(\mathbf{H}_1(\theta), \mathbf{H}_2(\tilde{\theta})) \|_{W^{1, \infty}}  \\
            & \qquad \qquad + \sup_{|\tilde{\theta} - \theta| \leq \beta} \|f(\mathbf{H}_1(\theta), \mathbf{H}_2(\tilde{\theta})) - f(\mathbf{H}_1(\tilde{\theta}), \mathbf{H}_2(\tilde{\theta})) \|_{W^{1, \infty}}  \\
            & \leq E_m(0) + \sum_{i \geq 0, j \geq 1} c_{i, j} j \|\widehat{\mathbf{H}}_1(\theta)\|_{W^{1, \infty}}^i (\|\widehat{\mathbf{H}}_2(\theta)\|_{W^{1, \infty}} + E_m^{\mathbf{H}_2}(\beta) )^{j-1} E_m^{\mathbf{H}_2}(\beta)\\
            & \qquad \qquad + \sum_{i \geq 1, j \geq 0} c_{i, j} i (\|\widehat{\mathbf{H}}_1(\theta)\|_{W^{1, \infty}} + E_m^{\mathbf{H}_1}(\beta))^{i-1} \|\widehat{\mathbf{H}}_2(\theta)\|_{W^{1, \infty}}^{j} E_m^{\mathbf{H}_1}(\beta).
        \end{align}
        Therefore $E(\beta) \leq \lim_{m \to \infty} E_m(\beta) < \infty$. 
        The third inequality comes from \cref{eq:property_of_nonlinear_functional_sequence_norm}.
    \end{proof}

\end{remark}

\subsection{Proof for \texorpdfstring{\cref{thm:stable_parameterization_opt}}{}}
\label{subsec:stable_parameterization_opt}

\begin{proof}
For any $1 \leq j \leq m$, assume the loss function we used is the $L_{\infty}$ norm: $\textrm{Loss} = \sup_t \|H_t - \widehat{H}_{m, t}\|_{\infty}$. 
Notice that by time-homogeneity, $\textrm{Loss} = \|H_t - \widehat{H}_{m, t}\|_{\infty}$ for any $t$. 
This loss function is larger than the common mean squared error, which is usually chosen in practice for the smoothness reason. 

\begin{align}
    \left |\frac{\partial \textrm{Loss}}{\partial w_j} \right | 
    & = \left | \frac{\partial \|H_t - \widehat{H}_{m, t}\|_{\infty}}{\partial w_j} \right |\\
    & = \left | \frac{\partial \sup_{\|\mathbf{x}\|_{\infty} \leq 1} |H_t(\mathbf{x}) - \widehat{H}_{m, t}(\mathbf{x})|}{\partial w_j} \right |\\
    & = \left | \frac{\partial \sup_{\|\mathbf{x}\|_{\infty} \leq 1} |\int_{-\infty}^t (\rho(t-s) - \sum_{i=1}^m c_i e^{-f(w_i) (t-s)} ) x_s ds| }{\partial w_j} \right |\\
    & = \left | \frac{\partial \int_{-\infty}^t |\rho(t-s) - \sum_{i=1}^m c_i e^{-f(w_i) (t-s)} | ds }{\partial w_j} \right |\\
    & = \left | \frac{\partial \int_{-\infty}^t |(\rho(t-s) - \sum_{i \neq j} c_i e^{-f(w_i) (t-s)}) - c_j e^{-f(w_j)(t-s)} | ds }{\partial w_j} \right |\\
    & = \left | \frac{\partial \int_0^{\infty} |(\rho(s) - \sum_{i \neq j} c_i e^{-f(w_i) s}) - c_j e^{-f(w_j) s} | ds }{\partial w_j} \right | \\
    & \leq \int_0^{\infty} \left | \frac{\partial | (\rho(s) - \sum_{i \neq j} c_i e^{-f(w_i) s}) - c_j e^{-f(w_j) s} |}{\partial w_j} \right | ds \\ 
    & \leq \int_0^{\infty} \left | \frac{\partial | c_j e^{-f(w_j) s} | }{\partial w_j} \right | ds
\end{align}

The first equality is the definition of the loss function. 
The second equality equality comes from the definition of the linear functional norm. 
The third equality expand the linear functional and linear RNNs into the convolution form. 
The fourth equality utilize the fact that we can manually select $x_t$'s sign to achieve the maximum value. 
The fifth equality is separating the term in dependent of variable $w_j$. 
The sixth equality is change of variable from $t-s$ to $s$. 
The inequality is triangular inequality. 
The last equality is dropping the term independent of variable $w_j$. 

\begin{align}
    \left |\frac{\partial \textrm{Loss}}{\partial w_j} \right | 
    & \leq \int_0^{\infty} \left | \frac{\partial | c_j e^{-f(w_j) s} | }{\partial w_j} \right | ds \\
    & = |c_j f'(w_j)| \int_0^{\infty} e^{-f(w_j) s} s \  ds \\
    & = \left | c_j \frac{f'(w_j)}{f(w_j)} \right | \int_0^{\infty} e^{-f(w_j) s} ds \\
    & = \left | c_j \frac{f'(w_j)}{f(w_j)^2} \right | (1 - \lim_{s \to \infty} e^{-f(w_j) s}) = \left | c_j \frac{f'(w_j)}{f(w_j)^2} \right | .
\end{align}

The first equality is evaluating the derivative. 
The second equality is extracting $|f'(w)|$ from integral. 
The third equality is doing the integration by parts. 

In particular, notice that $c_j$ is a constant independent of the recurrent weight parameterization $f$: 
\begin{equation}
    \widehat{H}_{m, t}(\mathbf{x}) = \int_{-\infty}^t \sum_{i=1}^m c_i e^{-f(w_i) (t-s)} x_s ds. 
\end{equation}
Therefore $c_j$ is a parameterization indepndent value, we will denote it by $C_{\mathbf{H}, \widehat{\mathbf{H}}_m}$.

Moreover, in the discrete setting, assume $h_{k+1}=f(w) \circ h_k + U x_k$, 
\begin{align}
    \left |\frac{\partial \textrm{Loss}}{\partial w_j} \right | 
    & \leq \sum_{k=0}^{\infty} \left | \frac{\partial | c_j f(w_j)^k | }{\partial w_j} \right | ds \\
    & = |c_j f'(w_j)| \sum_{k=1}^{\infty} k f(w_j)^{k-1} \\
    & = |c_j f'(w_j)| \left ( \sum_{k=1}^{\infty} f(w_j)^{k-1} \right )^2 \\
    & = \left | c_j \frac{f'(w_j)}{(1-f(w_j))^2} \right | .
\end{align}
So the gradient norm is bounded by
\begin{equation}
    \left | \frac{\partial \textrm{Loss}}{\partial w_j} \right | =  \frac{|c_j f'(w_j)|}{(1-f(w_j))^2}.
\end{equation}
\end{proof}

\paragraph{Nonlinear functionals}
Now we show the generalization into the nonlinear functional:
Consider the Volterra Series representation of the nonlinear functional. 
\begin{theorem}[\citep{boyd1984.AnalyticalFoundationsVolterra}]
    For any continuous time-invariant system with $x(t)$ as input and $y(t)$ as output can be expanded in the Volterra series as follow
    \begin{equation}
        y(t) = h_0 + \sum_{n=1}^N \int_0^t \cdots \int_0^t h_n(\tau_1, \dots, \tau_n) \prod_{j=1}^n x(t-\tau_j) d\tau_j.
    \end{equation}
    Here $N$ is the series' order. Linear functional is an order-1 Volterra series. 
\end{theorem}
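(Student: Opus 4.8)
\section*{Proof proposal}

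The stated result is the classical Volterra representation theorem, which I would recover from the Stone--Weierstrass theorem in the fading-memory setting of Boyd and Chua, specialized to the causal inputs $\mathbf{x} \in C_0(\mathbb{R}, \mathbb{R}^d)$ used throughout this paper. The plan is to first reduce the operator statement to a scalar-functional statement, then to exhibit the finite Volterra series as a point-separating subalgebra of continuous functionals on a compact input set, and finally to invoke Stone--Weierstrass so that the ``expansion'' is understood as uniform approximation by a finite Volterra series.

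First I would fix a fading-memory weighting: choose a strictly decreasing $w : [0, \infty) \to (0, 1]$ with $\lim_{s \to \infty} w(s) = 0$ and equip the relevant (past) inputs with the metric $d(\mathbf{x}, \mathbf{x}') = \sup_{s \geq 0} |x_{-s} - x'_{-s}|_\infty \, w(s)$, which discounts the remote past. Restricting to a uniformly bounded, equicontinuous ball of inputs, this ball becomes compact under $d$; this is the structural feature that makes the subsequent density argument run, and it is where the decaying-memory hypothesis on the system is consumed. By time-homogeneity it suffices to approximate the single functional $\mathbf{x} \mapsto H_0(\mathbf{x})$, since the value at any other $t$ follows by shifting the input; continuity of the system in the metric $d$ then makes $H_0$ a genuine element of $C(K)$ on the compact set $K$.

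The core step is to verify the Stone--Weierstrass hypotheses for the family $\mathcal{V}$ of finite Volterra functionals, i.e.\ finite sums $h_0 + \sum_{n=1}^N \int \cdots \int h_n(\tau_1, \dots, \tau_n) \prod_{j} x_{t-\tau_j} \, d\tau_j$ with continuous kernels. This family is manifestly a linear space containing the constants, and it is closed under pointwise multiplication: the product of an order-$p$ and an order-$q$ multilinear integral functional is, by Fubini on the product domain, again a finite multilinear integral functional of order $p+q$, so $\mathcal{V}$ is an algebra. It separates points because the order-one (linear) functionals alone already distinguish any two distinct inputs via the Riesz representation in \cref{thm:riesz_representation_theorem}. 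Stone--Weierstrass then yields uniform approximation of $H_0$ on $K$ by elements of $\mathcal{V}$, and time-homogeneity lifts this back to the operator identity claimed.

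The main obstacle I expect is the compactness / fading-memory step rather than the algebra: one must select the weighting $w$ and the admissible input class so that the relevant ball is genuinely compact in $d$ and so that the target system is continuous there. Without a fading-memory (decaying-memory) assumption the truncated Volterra series need not approximate uniformly and Stone--Weierstrass does not apply; by contrast, the closure under products together with the separation-of-points and containment-of-constants conditions are routine multilinear bookkeeping once the kernels are taken continuous.
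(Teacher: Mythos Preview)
The paper does not prove this statement at all: it is quoted as a cited result from \citep{boyd1984.AnalyticalFoundationsVolterra} and used as a black box in the ``Nonlinear functionals'' paragraph following the proof of \cref{thm:stable_parameterization_opt}. There is therefore no paper proof to compare your proposal against.

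That said, your sketch is essentially the Boyd--Chua argument itself, so it is the correct provenance. One point worth flagging is that the theorem as stated in the paper is somewhat loosely phrased (``any continuous time-invariant system \ldots can be expanded''), and your proposal correctly identifies that the honest content is a uniform \emph{approximation} statement on a compact input set under a fading-memory metric, not an exact identity; the equality sign in the displayed formula should be read in that approximation sense, with $N$ growing as the tolerance shrinks. Your identification of the compactness/fading-memory step as the only nontrivial ingredient is accurate, and the algebra/separation verifications are indeed routine.
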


For simplicity, we will only discuss the case for $N=2$. 
When we take the Hyena approach \citep{poli2023.HyenaHierarchyLarger} and approximate the order-2 kernel $h_2(\tau_1, \tau_2)$ with its rank-1 approximation: 
\begin{equation}
    h_2(\tau_1, \tau_2) = h_{2,1}(\tau_1) h_{2,2}(\tau_2). 
\end{equation}
Here $h_{2,1}$ and $h_{2,2}$ are again order-1 kernel which can be approximated with linear RNN's kernel. 
In other words, the same gradient bound also holds for general nonlinear functional with the following form: 
\begin{equation}
    G_{f}(w) := \left | \frac{\partial E}{\partial w} \right | = C_{\mathbf{H}, \widehat{\mathbf{H}}_m} \frac{|f'(w)|}{f(w)^2}.
\end{equation}
And the discrete version is
\begin{equation}
    G^D_{f}(w) := \left | \frac{\partial E}{\partial w} \right | = C_{\mathbf{H}, \widehat{\mathbf{H}}_m} \frac{|f'(w)|}{(1-f(w))^2}.
\end{equation}

\subsection{Lemmas}

\begin{lemma}
\label{lemma:strictly_increasing}
    If the activation $\sigma(\cdot)$ is bounded, strictly increasing, continuously differentiable function over $\mathbb{R}$. 
    Then for all $C>0$, there exists $\epsilon_C$ such that $\forall |z| \leq C_\epsilon$, $|\sigma'(z)| \geq \epsilon_C$. 
\end{lemma}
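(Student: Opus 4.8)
The plan is to reduce the claim to the Weierstrass extreme value theorem applied to $\sigma'$ on the compact interval $[-C, C]$. Since $\sigma$ is continuously differentiable, $\sigma'$ is a continuous function on $\mathbb{R}$, hence continuous on $[-C, C]$. First I would invoke compactness: a continuous function on a compact set attains its infimum, so there exists $z_0 \in [-C, C]$ with $\sigma'(z_0) = \min_{|z| \leq C} \sigma'(z)$. Setting $\epsilon_C := \sigma'(z_0)$ then yields $\sigma'(z) \geq \epsilon_C$ for every $z$ with $|z| \leq C$, and because $\sigma$ is increasing we have $\sigma' \geq 0$ throughout, so $|\sigma'(z)| = \sigma'(z) \geq \epsilon_C$ on the whole interval.

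The remaining and genuinely delicate step is to verify $\epsilon_C > 0$ rather than merely $\epsilon_C \geq 0$. Here I would stress that strict monotonicity by itself is \emph{not} sufficient: a strictly increasing $C^1$ map such as $z \mapsto z^3$ has vanishing derivative at the origin, and for it the infimum of $\sigma'$ over any interval containing $0$ equals $0$, so no positive lower bound can exist. The conclusion therefore relies on the stronger pointwise positivity $\sigma'(z) > 0$ for all $z$, which is precisely what the activations under consideration satisfy: the derivatives $\tanh'(z) = 1 - \tanh^2(z)$, the logistic derivative $\sigma(z)(1 - \sigma(z))$, and the softsign derivative $(1+|z|)^{-2}$ are all strictly positive everywhere on $\mathbb{R}$. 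Granting this pointwise positivity, $\sigma'(z_0)$ is a value of a strictly positive function and hence $\epsilon_C = \sigma'(z_0) > 0$, which closes the argument.

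I expect the main obstacle to be conceptual rather than computational, namely recognizing that the hypothesis must be read as pointwise positivity of $\sigma'$ (a property the intended class of tanh, sigmoid, and softsign activations enjoys) rather than mere strict monotonicity, since the latter admits isolated critical points. Once that reading is fixed, the proof is a direct application of compactness together with the extreme value theorem and requires no estimates; I would therefore keep the write-up short and simply flag the positivity assumption explicitly so that the quoted lower bound $\epsilon_C$ is well defined and strictly positive.
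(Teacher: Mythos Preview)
Your approach is essentially the same as the paper's: both apply the extreme value theorem to the continuous function $\sigma'$ on the compact interval $[-C,C]$ and take $\epsilon_C$ to be (a positive fraction of) the attained minimum. The paper simply sets $\epsilon_C := \tfrac{1}{2}\min_{|z|\le C}\sigma'(z)$ after asserting that strict monotonicity gives $\sigma'>0$ everywhere; you are in fact more careful than the paper in flagging that strict monotonicity alone does not force $\sigma'(z)>0$ pointwise (your $z\mapsto z^3$ counterexample is apt), and that the conclusion really rests on the pointwise positivity of $\sigma'$ enjoyed by the specific activations under consideration.
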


\begin{proof}
    Since $\sigma(\cdot)$ is monotonically increasing, therefore $\sigma'(\cdot) > 0, \forall z \geq 0$. 
    Notice that $\sigma'(\cdot)$ is continuous, for any $C > 0$, we know $\frac{1}{2} \min_{|z| \leq C} \sigma'(z) > 0$.
    Define $\epsilon_C := \frac{1}{2} \min_{|z| \leq C} \sigma'(z) > 0$, it can be seen the target statement is satisfied. 
\end{proof}

\begin{lemma}
\label{lemma:decaying_v}
    Assume the target functional sequence has a $\beta_0$-stable approximation and the perturbed model has a decaying memory,
    we show that $\tilde{v}_{m, t} \to 0$ for all $m$.
\end{lemma}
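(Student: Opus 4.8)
The plan is to transfer the decaying-memory property of the perturbed model from its \emph{output} onto the \emph{hidden-state derivative} $\tilde{v}_{m,t}$, and the entire difficulty is that the readout $\hat{y}_t = c^\top \sigma(h_t)$ is nonlinear, so the memory function only controls $\tilde{c}_m^\top(\sigma'(\tilde{h}_{m,t}) \circ \tilde{v}_{m,t})$ and not $\tilde{v}_{m,t}$ itself. First I would restrict to Heaviside inputs $\mathbf{u}^x$ and record the explicit dynamics of the derivative. Since $\tilde{h}_{m,t}=0$ for $t \le 0$ and the hidden dynamics are linear, $\tilde{v}_{m,t} := \frac{d\tilde{h}_{m,t}}{dt}$ solves $\frac{d\tilde{v}_{m,t}}{dt} = \widetilde{\Lambda}_m \tilde{v}_{m,t}$ on $t \ge 0$ with $\tilde{v}_{m,0} = \widetilde{U}_m x + \tilde{b}_m$, so $\tilde{v}_{m,t} = e^{\widetilde{\Lambda}_m t}(\widetilde{U}_m x + \tilde{b}_m)$; componentwise $\tilde{v}_{m,t,i} = e^{\tilde{\lambda}_{m,i} t}(\widetilde{U}_m x + \tilde{b}_m)_i$ because $\widetilde{\Lambda}_m$ is diagonal. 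Differentiating the readout gives $\frac{d}{dt}\widetilde{H}_t(\mathbf{u}^x) = \tilde{c}_m^\top(\sigma'(\tilde{h}_{m,t}) \circ \tilde{v}_{m,t})$, and the assumed decaying memory of the perturbed model forces this quantity to $0$ as $t \to \infty$.

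Second, I would neutralize the nonlinear factor $\sigma'(\tilde{h}_{m,t})$, which is the feature distinguishing this setting from the linear-readout analyses. Here \cref{eq:uniformly_bounded_hiddens} is essential: uniform boundedness of hidden states keeps each $\tilde{h}_{m,t,i}$ inside a fixed compact set, so \cref{lemma:strictly_increasing} yields $\sigma'(\tilde{h}_{m,t,i}) \ge \epsilon > 0$ uniformly in $t$. Thus the weighting multiplying $\tilde{v}_{m,t}$ in the readout is bounded away from zero and cannot mask a growing mode.

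Third, I would rule out non-decaying modes by contradiction using the explicit exponential form. If $\tilde{v}_{m,t} \not\to 0$, then some excited mode has $\tilde{\lambda}_{m,i} \ge 0$ with $(\widetilde{U}_m x + \tilde{b}_m)_i \ne 0$ for a suitable choice of $x$. But then the hidden coordinate $\tilde{h}_{m,t,i}$ grows without bound (linearly if $\tilde{\lambda}_{m,i} = 0$, exponentially if $\tilde{\lambda}_{m,i} > 0$), contradicting \cref{eq:uniformly_bounded_hiddens}. Hence every excited mode satisfies $\tilde{\lambda}_{m,i} < 0$, while an unexcited mode has $\tilde{v}_{m,t,i} \equiv 0$; either way $\tilde{v}_{m,t} \to 0$ for every $m$, which is what the main proof subsequently feeds into the Hartman--Grobman/Hurwitz argument.

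The main obstacle is precisely the third step's reliance on boundedness rather than on decaying memory alone: the decaying-memory hypothesis by itself is too weak, because a mode with $\tilde{\lambda}_{m,i} > 0$ drives $\tilde{h}_{m,t,i}$ to infinity, where a bounded activation (tanh, sigmoid, softsign) has $\sigma'(\tilde{h}_{m,t,i}) \to 0$ fast enough to annihilate that mode's contribution to the readout even though $\tilde{v}_{m,t,i}$ blows up. This is the new phenomenon created by placing the activation in the readout, absent from the linear-readout treatments of \citet{li2020.CurseMemoryRecurrent,wang2023inverse}, and resolving it is exactly what compels us to combine the decaying-memory property with \cref{lemma:strictly_increasing} and the uniform hidden-state bound rather than use decaying memory in isolation.
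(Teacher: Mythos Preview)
Your argument is correct but takes a genuinely different route from the paper. The paper's proof exploits the $\beta_0$-stable approximation hypothesis in an essential way: since the perturbed model has decaying memory for \emph{every} admissible perturbation, one can perturb the readout vector $c_m$ by small increments $\Delta c_1,\dots,\Delta c_m$ (each of norm $<\beta_0$) so that $\{c_m+\Delta c_i\}_{i=1}^m$ form a basis of $\mathbb{R}^m$. Decaying memory then gives $(c_m+\Delta c_i)^\top(\sigma'(\tilde h_{m,t})\circ\tilde v_{m,t})\to 0$ for each basis vector, hence the full vector $\sigma'(\tilde h_{m,t})\circ\tilde v_{m,t}\to 0$; only at the final step does the paper invoke hidden-state boundedness together with \cref{lemma:strictly_increasing} to strip the factor $\sigma'(\tilde h_{m,t})$.

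Your route bypasses the basis-perturbation idea entirely: you use boundedness of the (perturbed) hidden states directly to force every excited diagonal mode to have $\tilde\lambda_{m,i}<0$, since otherwise $\tilde h_{m,t,i}$ would grow linearly or exponentially on a Heaviside input. This is more elementary and in fact renders your Steps~1--2 (the decaying-memory computation and the lower bound on $\sigma'$) unused in the actual deduction; the conclusion rests solely on Step~3 and \cref{eq:uniformly_bounded_hiddens}. The tradeoff is that the paper's argument makes the lemma's stated hypotheses do real work and would still function in settings where one only controls the output derivative rather than the raw hidden state, whereas your argument reveals that under the paper's standing boundedness assumption the decaying-memory and stable-approximation hypotheses are, for this particular lemma, not actually needed.
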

\begin{proof}
    For any $m$, fix $\widetilde{\Lambda}_m$ and $\widetilde{U}_m$. 
    Since the perturbed model has a decaying memory, 
    \begin{equation}
        \lim_{t \to \infty} \left | \frac{d}{dt}\widetilde{H}_m (\mathbf{u}^x) \right | = \lim_{t \to \infty} \left | c^\top (\sigma'(\tilde{h}_{m, t}) \circ \frac{d\tilde{h}_{m, t}}{dt}) \right |  = \lim_{t \to \infty} \left | c^\top (\sigma'(\tilde{h}_{m, t}) \circ \tilde{v}_{m, t}) \right | = 0.
    \end{equation}

    By linear algebra, there exist $m$ vectors $\{\Delta c_i\}_{i=1}^m$, $|\Delta c_i|_{\infty} < \beta$ such that $c_m+ \Delta c_1$, \dots, $c_m + \Delta c_m$ form a basis of $\mathbb{R}^m$.
    We can then decompose any vector $u$ into
    \begin{equation}
        u = k_{1} (c_m+ \Delta c_1) + \cdots + k_{m} (c_m+ \Delta c_m).
    \end{equation}
    Take the inner product of $u$ and $\tilde{v}_{m, t}$, we have
    \begin{equation}
        \lim_{t \to \infty} u^\top (\sigma'(\tilde{h}_{m, t}) \circ \tilde{v}_{m, t}) = \sum_{i=1}^m k_i \lim_{t \to \infty} (c_m+ \Delta c_i)^\top (\sigma'(\tilde{h}_{m, t}) \circ \tilde{v}_{m, t})  = 0
    \end{equation}
    As the above result holds for any vector $u$, we get
    \begin{equation}
        \lim_{t \to \infty} \left | \sigma'(\tilde{h}_{m, t}) \circ \tilde{v}_{m, t} \right |_{\infty} = 0.
    \end{equation}

    As required in \cref{eq:uniformly_bounded_hiddens}, the hidden states are uniformly (in $m$) bounded over bounded input sequence. 
    There exists constant $C_0>0$ such that 
    \begin{equation}
        \sup_{m, t} |h_{m, t}|_{\infty} < C_0.
    \end{equation}
    Since $\sigma$ is continuously differentiable and strictly increasing, by \cref{lemma:strictly_increasing}, there exists $\epsilon_{C_0} > 0$ such that 
    \begin{equation}
        |\sigma'(z)| > \epsilon_{C_0}, \quad \forall |z| \leq C_0. 
    \end{equation}
    Therefore 
    \begin{equation}
        \sup_t \left |\sigma'(\tilde{h}_{m, t}) \right |_{\infty} > \epsilon_{C_0}.
    \end{equation}

    We get
    \begin{equation}
        \lim_{t \to \infty} |\tilde{v}_{m, t}|_{\infty} = 0.
    \end{equation}
\end{proof}

\begin{lemma}
    \label{lemma:Hartman_Grobman}
    Consider a dynamical system with the following dynamics: $h_0=0$
    \begin{equation}\label{eq:hartman_grobman}
        \begin{aligned}
            \frac{dv_{t}}{dt} & = \Lambda v_{t},  \\
            v_{0}             & = \Lambda h_0 + \widetilde{U} x_0 + \tilde{b} = \widetilde{U} x_0 + \tilde{b}.
        \end{aligned}
    \end{equation}
    If $\Lambda \in \mathbb{R}^{m \times m}$ is diagonal, hyperbolic and the system in \cref{eq:hartman_grobman} is satisfies $\lim_{t \to \infty} v_t = 0$ over any bounded Heaviside input $\mathbf{u}^{x_0}, |x_0|_\infty < \infty$,
    then the matrix $\Lambda$ is Hurwitz.
\end{lemma}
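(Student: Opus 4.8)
The plan is to exploit the diagonal structure of $\Lambda$ so that the linear system decouples completely into independent scalar equations, after which the claim reduces to a statement about the signs of the (real) eigenvalues. Writing $\Lambda = \mathrm{diag}(\lambda_1, \dots, \lambda_m)$ with each $\lambda_i \in \mathbb{R}$, the unique solution of \cref{eq:hartman_grobman} is $v_t = e^{\Lambda t} v_0$, whose $i$-th coordinate is simply $(v_t)_i = e^{\lambda_i t} (v_0)_i$. The hyperbolicity hypothesis, applied to a real diagonal matrix, says precisely that each $\lambda_i \neq 0$, so every eigenvalue is either strictly positive or strictly negative and there is no borderline (imaginary-axis) case to analyze. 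The name Hartman--Grobman is invoked here only because hyperbolicity is the operative condition; since the system is already linear, its asymptotic behavior is read off directly from the eigenvalue signs rather than from a conjugacy to a linearization.

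Next I would argue by contradiction. Suppose $\Lambda$ is not Hurwitz, so there exists an index $j$ with $\lambda_j > 0$. The idea is to exhibit a single bounded Heaviside amplitude $x_0$ for which the decay hypothesis fails. Since $v_0 = \widetilde{U} x_0 + \tilde{b}$, I would choose $x_0$ so that the $j$-th coordinate $(v_0)_j = (\widetilde{U} x_0)_j + \tilde{b}_j$ is nonzero: if $\tilde{b}_j \neq 0$ take $x_0 = 0$, and otherwise pick $x_0$ aligned with a nonzero entry of the $j$-th row of $\widetilde{U}$. For such an input the coordinate $(v_t)_j = e^{\lambda_j t} (v_0)_j$ grows without bound, so $|v_t|_\infty$ does not tend to $0$, contradicting the assumption that $\lim_{t \to \infty} v_t = 0$ for every bounded Heaviside input. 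Hence no positive eigenvalue can occur, every $\lambda_i < 0$, and $\Lambda$ is therefore Hurwitz.

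The main obstacle is exactly the step of realizing a nonzero $j$-th coordinate of $v_0$: if both $\tilde{b}_j = 0$ and the entire $j$-th row of $\widetilde{U}$ vanish, then $(v_t)_j \equiv 0$ for all admissible inputs, and the decay hypothesis conveys no information about the sign of $\lambda_j$. I would dispatch this by noting that in the application of the lemma the perturbed data $(\widetilde{U}, \tilde{b})$ are generic, so such degenerate rows are excluded and every coordinate is excitable by some bounded input; equivalently, one restricts attention to the reachable coordinates, which are precisely those that control the asymptotics of $v_t$. Once this reachability caveat is settled, the remainder is the elementary observation that a real-diagonal, hyperbolic generator whose flow decays on a spanning set of initial conditions admits no positive eigenvalue, which is what the lemma asserts.
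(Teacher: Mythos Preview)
Your argument is correct and follows essentially the same route as the paper. Both write the explicit solution $v_t = e^{\Lambda t}(\widetilde{U}x_0 + \tilde b)$ and then invoke the fact that, because $\widetilde{U}$ and $\tilde b$ arise from perturbations, the attainable initial conditions $v_0$ fill out all of $\mathbb{R}^m$ (the paper says the set of $v_0$ is ``$m$-dimensional'', you say each coordinate is excitable by genericity); from there the conclusion that every real eigenvalue of the hyperbolic diagonal $\Lambda$ must be negative is immediate.
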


\begin{proof}
    By integration we have the following explicit form:
    \begin{equation}
        v_t = e^{\Lambda t} v_0 = e^{\Lambda t} (\widetilde{U} x_0 + \tilde{b}).
    \end{equation}
    The stability requires $\displaystyle \lim_{t \to \infty} |v_t| = 0$ for all inputs $v_0 = \widetilde{U}x_0 + \tilde{b}$. 
    Notice that with perturbation from $\tilde{U}$ and $\tilde{b}$, the set of initial points $\{v_0\}$ is m-dimensional.
    Therefore the matrix $\Lambda$ is Hurwitz in the sense that all eigenvalues' real parts are negative. 
\end{proof}

\begin{lemma}
    \label{lemma:model_decay_target_decay}
    Consider a continuous function $f: [0, \infty) \to \mathbb{R}$, assume it can be approximated by a sequence of continuous functions $\{f_m\}_{m=1}^\infty$ universally:
    \begin{equation}
        \label{eq:universal_approximation}
        \lim_{m \to \infty} \sup_{t \geq 0} |f(t) - f_m(t)| = 0.
    \end{equation}
    Assume the approximators $f_m$ are uniformly exponentially decaying with the same $\beta_0 > 0$:
    \begin{equation}
        \lim_{t \to \infty} \sup_{m \in \mathbb{N}_+} e^{\beta_0 t} |f_m(t)| \to 0.
    \end{equation}
    Then the function $f$ is also decaying exponentially:
    \begin{equation}
        \lim_{t \to \infty} e^{\beta t} |f(t)| \to 0, \quad \forall 0 < \beta < \beta_0.
    \end{equation}
\end{lemma}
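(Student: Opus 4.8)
The plan is to combine the two hypotheses, but \emph{not} through the naive bound $e^{\beta t}|f(t)| \le e^{\beta t}|f(t)-f_m(t)| + e^{\beta t}|f_m(t)|$. That estimate is useless: for each fixed $m$ the uniform error $\sup_{t}|f(t)-f_m(t)|$ is a small constant, but the weight $e^{\beta t}$ is unbounded, so the first term diverges. The real difficulty is that the two assumptions live at different limits --- closeness that is uniform in $t$ holds only as $m\to\infty$, whereas the exponential decay that is uniform in $m$ holds only as $t\to\infty$ --- and they must be merged without ever letting the growing weight $e^{\beta t}$ act on the approximation error. This is the step I expect to be the crux.

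The resolution is to transfer the decay estimate onto $f$ \emph{before} the exponential weight is introduced. Define $M(t) := \sup_{m\in\mathbb{N}_+} e^{\beta_0 t}|f_m(t)|$, which is precisely the quantity the second hypothesis drives to $0$ as $t\to\infty$ (so $M(t)$ is in particular finite for all large $t$). By construction $|f_m(t)| \le M(t)\,e^{-\beta_0 t}$ for every $m$ and every $t\ge 0$, a bound that is \emph{independent of $m$}. Since uniform convergence forces pointwise convergence, for each fixed $t$ we have $f_m(t)\to f(t)$, and passing to the limit $m\to\infty$ in this $m$-uniform bound gives $|f(t)| \le M(t)\,e^{-\beta_0 t}$.

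With this pointwise estimate on $f$ the conclusion follows at once. For any $0<\beta<\beta_0$,
\begin{equation*}
    e^{\beta t}|f(t)| \;\le\; M(t)\,e^{(\beta-\beta_0)t},
\end{equation*}
and as $t\to\infty$ the factor $M(t)\to 0$ while $e^{(\beta-\beta_0)t}\to 0$ because $\beta-\beta_0<0$; hence the product tends to $0$, which is the claimed exponential decay. The strict inequality $\beta<\beta_0$ is exactly what supplies the spare exponent $(\beta_0-\beta)$ needed to absorb the weight, so the argument does not extend to the endpoint $\beta=\beta_0$.
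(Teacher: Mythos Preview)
Your argument is correct, and it is considerably more elementary than the paper's own proof. The paper proceeds by the change of variables $s=e^{-\beta_0 t}$, defining a transform $(\mathcal{T}g)(s)=g(-\tfrac{\log s}{\beta_0})/s$ that carries the problem onto the compact interval $[0,1]$; it then shows $\{\mathcal{T}f_m\}$ is Cauchy in $C([0,1])$, invokes completeness to obtain a continuous limit $f^*$ with $f^*(0)=0$, identifies $f^*=\mathcal{T}f$ on $(0,1]$, and reads off the decay from continuity at $s=0$. Your route bypasses all of this: you simply pass the $m$-uniform pointwise bound $|f_m(t)|\le M(t)e^{-\beta_0 t}$ through the pointwise limit $f_m(t)\to f(t)$ to obtain $|f(t)|\le M(t)e^{-\beta_0 t}$, and the conclusion is immediate. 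Notably, you only use pointwise convergence of $f_m$ to $f$, not the full uniform convergence, so your argument is in fact valid under a weaker hypothesis than the one stated. The paper's compactification trick is a reusable device for such ``uniform-in-one-limit, decaying-in-the-other'' situations, but here it is heavier machinery than the problem requires.

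One small remark: your closing sentence is too conservative. Your own bound $e^{\beta_0 t}|f(t)|\le M(t)\to 0$ already establishes the endpoint case $\beta=\beta_0$; the spare factor $e^{(\beta-\beta_0)t}$ is not actually needed. So your method proves a slightly stronger conclusion than the lemma asserts (and than the paper's proof delivers).
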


The proof is the same as Lemma A.11 from \citep{wang2023.InverseApproximationTheory}. 
For completeness purpose, we attach the proof here: 

\begin{proof}
    Given a function $f \in C([0, \infty))$, we consider the transformation $\mathcal{T}f : [0, 1] \to \mathbb{R}$ defined as:
    \begin{equation}
        (\mathcal{T}f)(s) = \left\{\begin{array}{lcl} 0, & & {s = 0}\\ \frac{f(-\frac{\log s}{\beta_0})}{s},& & {s \in (0, 1].} \end{array} \right.
    \end{equation}
    Under the change of variables $s = e^{-\beta_0 t}$, we have:
    \begin{equation}
        f(t) = e^{-\beta_0 t} (\mathcal{T}f) (e^{-\beta_0 t}), \quad t \geq 0.
    \end{equation}
    According to uniformly exponentially decaying assumptions on $f_m$:
    \begin{equation}
        \lim_{s \to 0^+} (\mathcal{T} f_m)(s) 
        = \lim_{t \to \infty} \frac{f_m(t)}{e^{-\beta_0 t}} 
        = \lim_{t \to \infty} e^{\beta_0 t} f_m(t) 
        = 0,
    \end{equation}
    which implies $\mathcal{T} f_m \in C([0, 1])$. 
    
    For any $\beta < \beta_0$, let $\delta = \beta_0 - \beta > 0$. Next we have the following estimate
    \begin{align}
        & \sup_{s \in [0, 1]} \left | (\mathcal{T} f_{m_1})(s) - (\mathcal{T} f_{m_2})(s) \right | \\
        = & \sup_{t \geq 0} \left | \frac{f_{m_1}(t)}{e^{-\beta t}} - \frac{f_{m_2}(t)}{e^{-\beta t}} \right | \\
        \leq & \max \left \{ \sup_{0 \leq t \leq T_0} \left | \frac{f_{m_1}(t)}{e^{-\beta t}} - \frac{f_{m_2}(t)}{e^{-\beta t}} \right | , C_0 e^{-\delta T_0} \right \} \\
        \leq & \max \left \{ e^{\beta T_0} \sup_{0 \leq t \leq T_0} \left |f_{m_1}(t) - f_{m_2}(t) \right | , C_0 e^{-\delta T_0} \right \}
    \end{align}
    where $C_0$ is a constant uniform in $m$.
    
    For any $\epsilon>0$, take $T_0 = - \frac{\ln(\frac{\epsilon}{C_0})}{\delta},$ we have $C_0 e^{-\delta T_0} \leq \epsilon$. 
    For sufficiently large $M$ which depends on $\epsilon$ and $T_0$, by universal approximation (\cref{eq:universal_approximation}), we have $\forall m_1, m_2 \geq M$,
    \begin{align}
        \sup_{0 \leq t \leq T_0} \left |f_{m_1}(t) - f_{m_2}(t) \right | & \leq e^{-\beta T_0} \epsilon, \\
        e^{\beta T_0} \sup_{0 \leq t \leq T_0} \left |f_{m_1}(t) - f_{m_2}(t) \right | & \leq \epsilon.
    \end{align}
    Therefore, $\{f_m\}$ is a Cauchy sequence in $C([0, \infty))$.

    Since $\{f_m\}$ is a Cauchy sequence in $C([0, \infty))$ equipped with the sup-norm, using the above estimate we can have$\{\mathcal{T} f_m\}$ is a Cauchy sequence in $C([0, 1])$ equipped with the sup-norm.
    By the completeness of $C([0, 1])$, there exists $f^* \in C([0, 1])$ with $f^*(0) = 0$ such that
    \begin{equation}
        \lim_{m \to \infty} \sup_{s \in [0, 1]} |(\mathcal{T} f_m)(s) - f^*(s)| = 0.
    \end{equation}
    Given any $s > 0$, we have
    \begin{equation}
        f^*(s) = \lim_{m \to \infty} (\mathcal{T} f_m)(s) = (\mathcal{T} f)(s),
    \end{equation}
    hence
    \begin{equation}
        \lim_{t \to \infty} e^{\beta t} f(t) = \lim_{s \to 0^+} (\mathcal{T} f)(s) = f^*(0) = 0.
    \end{equation}
\end{proof}

\section{Motivation for the gradient-over-weight Lipschitz criterion}
\label{sec:motivation_for_gow}
Here we discuss the motivation for adopting the gradient-over-weight boundedness as the criterion for ``best-in-stability'' reparameterization.
First of all, the ``best'' reparameterization is proposed to further improve the optimization stability across memory patterns with different decays. 
The criterion ``gradient is Lipschitz to the weight'' is a necessary condition for the stability in the following sense:
\begin{enumerate}
    \item Consider functions $f(x) = x^4$, the gradient function $\frac{d f}{d x}(x) = 4x^3$ does not have a global Lipschitz coefficient for all input values $x$. Therefore for any fixed positive learning rate $\eta$, there exists an initial point $x_0$ (for example $x_0 = \sqrt{\frac{1}{2\eta}} + 1$) such that the convergence from initial point $x_0$ cannot be achieved via the gradient descent step 
    \begin{equation}
    \label{eq:gradient_descent_step}
        x_{k+1} = x_k - \eta g(x_k).
    \end{equation}
    It can be verified the convergence does not hold as $|x_{k+1}| > |x_k|$ for all $k$ when $x_0 = \sqrt{\frac{1}{2\eta}} + 1$. This comes from the fact that $|x_k| \geq \sqrt{\frac{1}{2\eta}}, \eta g(x) \geq 2 x_k$ hold for all $k$.
    \item Consider functions $f(x) = x^2$, the gradient function $g(x) = 2x$ is associated with a Lipschitz constant $L=2$. Then the same gradient descent step converges for any $\eta \leq \frac{1}{2}$ in \cref{eq:gradient_descent_step}. 
    \item As can be seen in the above two examples, \textbf{the criterion ``gradient is Lipschitz to the weight'' is associated with the convergence under large learning rate.} 
    As the use of larger learning rate is usually associated with faster convergence \citep{smith2019.SuperconvergenceVeryFast}, smaller generalization errors \citep{li2019.ExplainingRegularizationEffect}, we believe the Lipschitz criterion is a suitable stability criterion for the measure of optimization stability. 
    \item The gradient-over-weight ratio evaluated in \cref{fig:grad_norm_stable_parameterization_language_model}(a) is a numerical verification of our Theorem 3.4. The gradients of stable reparameterizations are less susceptible to the well-known issue of exploding or vanishing gradients \citep{bengio1994.LearningLongtermDependencies,hochreiter1998.VanishingGradientProblem}.
\end{enumerate}

\begin{table}[htb!]
    \caption{Summary of reparameterizations and corresponding gradient norm functions in continuous and discrete time. Notice that the $G_f$ and $G_f^D$ are rescaled up to a constant $C_{\mathbf{H}, \widehat{\mathbf{H}}}$.}
    \label{table:continuous_discrete_different_stable_parameterization_gradient_norm}
    \centering
    \begin{tabular}{c|c|c|c}
    \toprule
                & Reparameteriations& $f$               & $G_f$ or $G_f^D$ \\
    \midrule
    Continuous  & ReLU              & $-\textrm{ReLU}(w)$   & $\frac{1}{w^2} \1_{\{w > 0\}}$\\
                & Exp               & $-\exp(w)$            & $e^{-w}$ \\
                & Softplus          & $-\log(1+\exp(w))$    & $\frac{\exp(w)}{(1+\exp(w)) \log(1+\exp(w))^2}$ \\
                & ``Best''(Ours)    & $-\frac{1}{a w^2 + b}, a > 0, b > 0$      & $2a |w|$ \\
    \midrule
    Discrete    & ReLU              & $\exp(-\textrm{ReLU}(w))$                 &  $\frac{\exp(-w)}{(1-\exp(-w))^2} \1_{\{w > 0\}}$\\
                & Exp               & $\exp(-\exp(w))$                          &  $\frac{\exp(w-\exp(w))}{(1-\exp(-\exp(w)))^2}$\\
                & Softplus          & $\frac{1}{1+\exp(w)}$                     &  $e^{-w}$\\
                & Tanh              & $\tanh(w) = \frac{e^{2w} - 1}{e^{2w}+1}$  &  $e^{2w}$\\
                & ``Best''(Ours)    & $1-\frac{1}{w^2 + 0.5} \in (-1, 1)$       &  $2|w|$ \\
    \bottomrule
    \end{tabular}
\end{table}

\begin{figure}[t!]{
    \centering
    \subfigure[][Continuous time $\frac{G_f(w)}{|w|}$]{
        \includegraphics[width=0.42\textwidth]{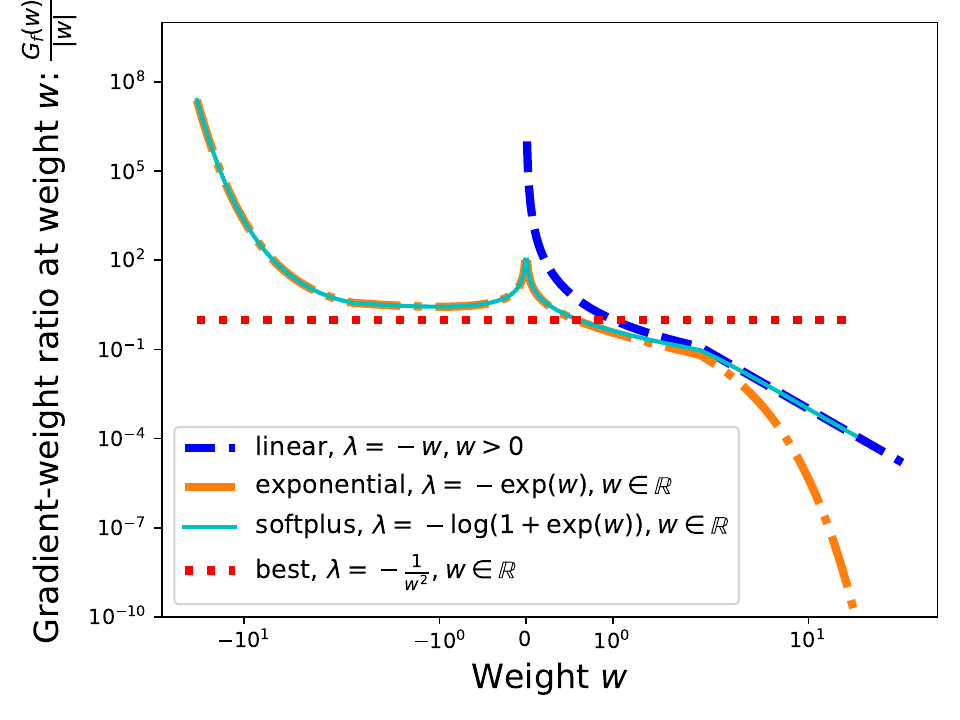}
    }
    \subfigure[][Discrete time $\frac{G_f^D(w)}{|w|}$]{
        \includegraphics[width=0.42\textwidth]{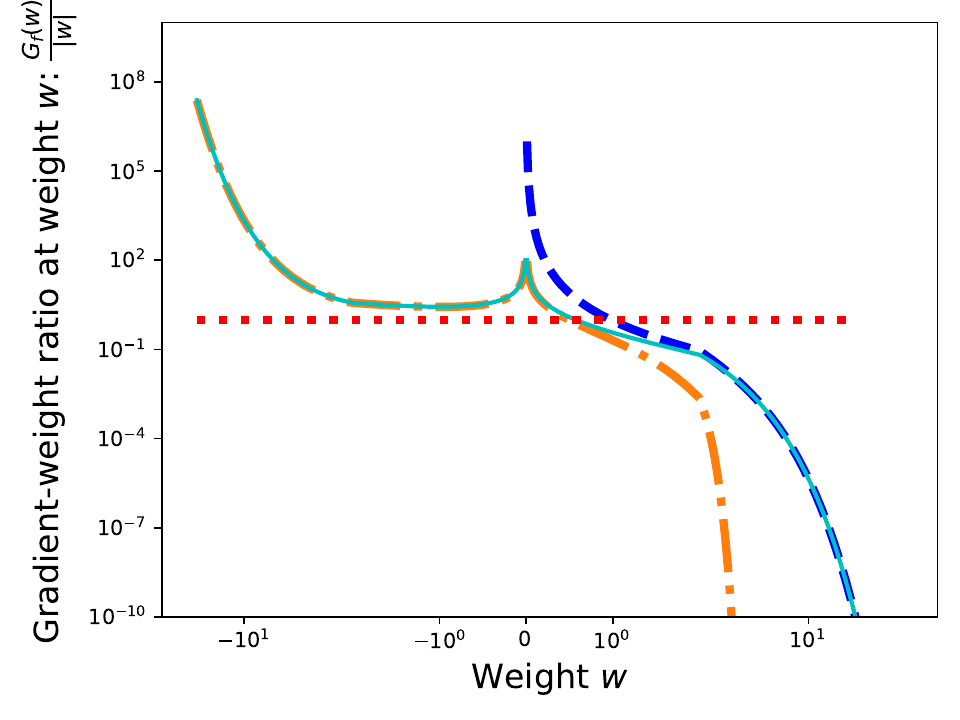}
    }
    \caption{
        Gradient norm function $G_f$ and $G_f^D$ of different parameterization methods.
        The ``best'' parameterization methods maintain a balanced gradient-over-weight ratio. 
        }
    \label{fig:grad_norm_stable_parameterization}
}
\end{figure}

\section{Comparison of different recurrent weights parameterization schemes}
\label{subsec:Comparison_of_different_recurrent_weights}

Here we evaluate the gradient norm bound function $G_f$ and $G_f^D$ for different parameterization schemes in \cref{table:continuous_discrete_different_stable_parameterization_gradient_norm} and \cref{fig:grad_norm_stable_parameterization}. 

\paragraph{On the Scenarios Where ``Best'' Parameterization is Preferable}
There is no guarantee that the ``best'' parameterization will outperform the Exp/Softplus parameterizations when all models exhibit good training stability. 
When the learning rate has been finetuned (at 5e-4) for CIFAR10, the optimal performance from ``best'' parameterization is worse than exp parameterization. 
This outcome is expected since this paper focuses on training stability rather than generalization. 
The key insight from Tables 1 and 2 is that the ``best'' parameterization offers a theoretically grounded alternative to the exp/softplus parameterizations.

\section{Numerical details}
\label{appendix:numerical_details}

In this section, the details of numerical experiments are provided for the completeness and reproducibility. 

\subsection{Synthetic task}

We conduct the approximation of linear functional with linear RNNs in the one-dimensional input and one-dimensional output case.
The synthetic linear functional is constructed with the polynomial decaying memory function is $\rho(t) = \frac{1}{(t+1)^{1.1}}$. 
Sequence length is 100. 
Total number of synthetic samples is 153600. 
The learning rate used is 0.01 and the batch size is 512. 

The perturbation list $\beta \in [0, 10^{-3}, 10^{-3} * 2^{1/2}, 10^{-3} * 2^{2/2}, \dots, 10^{-3} * 2^{20/2}]$.
Each evaluation of the perturbed error is sampled with 30 different weight perturbations to reduce the variance. 

\subsection{Language models}

The language modeling is done over WikiText-103 dataset \citep{merity2016.PointerSentinelMixtureb}. 
The model we used is based on the Hyena architecture with simple real-weights state-space models as the mixer \citep{poli2023.HyenaHierarchyLarger,smith2023.SimplifiedStateSpace}. 
The batch size is 16, total steps 115200 (around 16 epochs), warmup steps 1000.
The optimizer used is AdamW and the weight decay coefficient is 0.25. 
The learning rate for the recurrent layer is 0.004 while the learning rate for other layers are 0.005. 

\begin{figure}[t!]{
\centering
\subfigure[][lr=0.002, ``best'' reparameterization is also not optimal, but the final loss is comparable against Exp and Softplus]{
\includegraphics[width=0.5\textwidth]{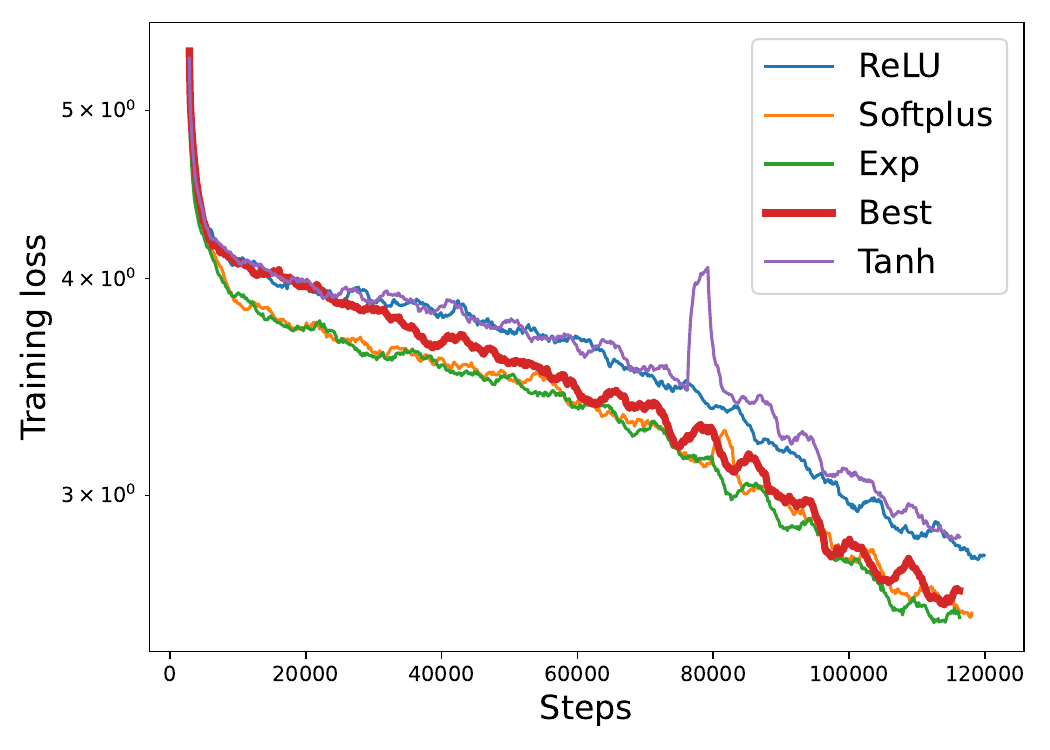}
}
\subfigure[][lr=0.01,  ``best'' reparameterization achieve the smallest loss]{
\includegraphics[width=0.5\textwidth]{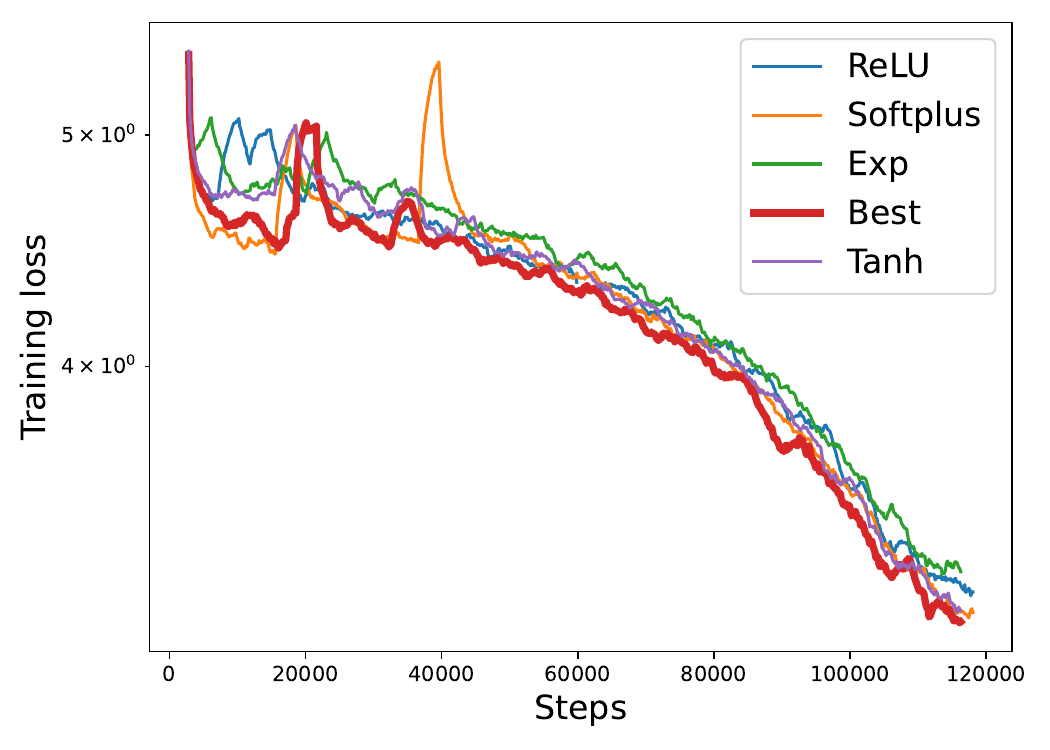}
}
\caption{
    The stability advantage of ``best'' reparameterization (red line) is usually better when the learning rate is larger.
}
\label{fig:1LR2LR5LR}
}
\end{figure}

In the main paper, we provide the training loss curve for learning rate = 0.005 as the stability of ``best'' discrete-time parameterization $f(w) = 1-\frac{1}{w^2+0.5}$ is mostly significant as the learning rate is large.
In \cref{fig:1LR2LR5LR}, we further provide the results for other learning rates (lr = 0.002, 0.010).
Despite the final loss not being optimal for the ``best'' reparameterization, it is observed that the training process exhibits enhanced stability compared to other parameterization methods.

\modification{
\subsection{On the stability of ``best'' reparameterization for large models}
\label{sec:stability_best_reprameterization}
The previous experiment on WikiText-103 language modelling shows the performance of stable reparameterization over the unstable cases. 
We further verify the optimization stability of ``best'' reparameterization in the following extreme setting. 
We construct a large scale language model with 3B parameters and train with larger learning rate (lr=0.01).
As can be seen in the following table, the only convergent model is the model with ``best'' reparameterization. 
We emphasize that the only difference between these models are the parameterization schemes for recurrent weights. 
Therefore the best reparameterization is the most \textbf{stable} parameterization. 
(We repeats the experiments with different seeds for three times.)
\begin{table}[ht!]
    \centering
    \begin{tabular}{c|cccc}
    & ``Best'' & Exp & Softplus & Direct \\
    \hline
    Convergent / total experiments & 3/3 & 0/3 & 0/3 & 0/3
    \end{tabular}
    \caption{Experiment to the stability of ``best'' reparameterization over lr = 0.01. All other reparameterizations diverged within 100 steps while the ``best'' reparameterizations can be used to train the model.}
    \label{tab:3B}
\end{table}
}

\subsection{Additional numerical results for associative recalls}
\label{subsec:associative_recall}
In this section, we study the performance of  of different stable reparameterizations over the extremely long sequences (up to 131k). 
It can be seen in \cref{tab:associative_recall} that stable parameterizations are better than the case without reparameterization and simple clipping. 
The advantage is more significant when the sequence length is longer. 
The models are trained under the exactly same hyperparameters.
\begin{table}[ht!]
\centering
\begin{tabular}{c|cc|cc}
Reparameterizations& Train acc, T=20 &Test acc, T=20 &Train acc, T=131k &Test acc, T=131k  \\
\hline
\textbf{``Best''}    & \textbf{57.95}            & \textbf{99.8}        & \textbf{53.57}          & \textbf{100} \\
Exp(S5)     & 54.55            & \textbf{99.8}        & \textbf{53.57}          & \textbf{100}       \\
Clip        & 50.0             & 76.6            & 13.91          & 9.4           \\
Direct      & 43.18            & 67.0            & 16.59          & 5.6         
    \end{tabular}
    \caption{Comparison of parameterizations on associative recalls. The first two columns are the train and test accuracy over \textbf{sequence length 20}, vocabulary size 10, while the second two columns are the train and test accuracy over \textbf{sequence length 131k} and vocabulary size 30.}
    \label{tab:associative_recall}
\end{table}

\end{document}